\documentclass{article}

\usepackage{iclr2027_conference,times}
\iclrfinalcopy
\usepackage[utf8]{inputenc}
\usepackage[T1]{fontenc}
\usepackage{hyperref}
\usepackage{url}
\usepackage{booktabs}
\usepackage{amsfonts}
\usepackage{nicefrac}
\usepackage{microtype}
\usepackage{xcolor}
\usepackage{amsmath}
\usepackage{amsthm}
\usepackage{amssymb}
\usepackage{graphicx}
\usepackage{subcaption}
\usepackage{svg}
\usepackage{wrapfig}
\usepackage{enumitem}
\usepackage{adjustbox}
\newtheorem{proposition}{Proposition}
\newtheorem{theorem}{Theorem}

\title{A Generative Model of Complex Networks Using Graphons and Neural Inverse Operators}

\author{%
  Wooseong Choi\thanks{Equal contribution.} \\
  Department of Biomedical Engineering \\
  University of Southern California \\
  Los Angeles, CA 90089 \\
  \texttt{wooseong@usc.edu}
  \And
  Italo'Ivo Lima Dias Pinto\footnotemark[1] \\
  Department of Electrical Engineering \\
  University of Southern California \\
  Los Angeles, CA 90089 \\
  \texttt{italoivo@gmail.com}
  \And
  Chen Sun \\
  Department of Biomedical Engineering \\
  University of Southern California \\
  Los Angeles, CA 90089 \\
  \texttt{csun8109@usc.edu}
  \And
  Gaurav Gupta\thanks{Work done outside of the author's role at Amazon.} \\
   AWS AI\\
   Santa Clara, CA 95050 \\
  \texttt{gaurav71531@gmail.com }
  \And
  Dong Song \thanks{Co-corresponding authors.} \\
  Department of Biomedical Engineering \\
  University of Southern California \\
  Los Angeles, CA 90089 \\
  \texttt{dsong@usc.edu}
  \And
  Paul Bogdan\footnotemark[3] \\
  Department of Electrical Engineering \\
  University of Southern California \\
  Los Angeles, CA 90089 \\
  \texttt{pbogdan@usc.edu}
}
\usepackage{placeins}

\begin{document}

\maketitle
\lhead{Preprint. Under review.}

\begin{abstract}
Generative graph models are central to understanding and simulating complex networks. However, existing approaches have complementary strengths and limitations. Mechanistic models offer interpretability but rely on instance-specific estimation methods. Deep generative models, on the other hand, offer amortized inference at the cost of interpretability and are largely limited to graph sizes seen during training. Scientific applications motivate a framework that retains the strengths of both paradigms. We bridge them by formulating both the generative model and parameter recovery in function space. A multifractal step graphon extends standard step graphons with a recursive construction that compactly parameterizes complex networks. This formulation admits a neural inverse operator to recover its parameters, enabling inference on unseen graph sizes. We evaluate our model, trained only on synthetic multifractal step graphon realizations, against both paradigms. Against a graph foundation model pretrained on empirical networks, our method achieves the best average performance on three of four metrics in a zero-shot graph-generation benchmark, indicating that the model transfers to real-world graphs. We also apply our method to single-observation networks, a regime largely inaccessible to deep models that require training corpora, where it performs comparably to an instance-specific method that optimizes on each graph. In a multi-subject EEG case study, the inferred parameters track a reversible change in brain state more sensitively than traditional network statistics. Together, these results indicate that mechanistic interpretability and amortized inference can be effectively unified in a generative graph model to enhance our understanding of complex networks.

\end{abstract}

\section{Introduction}

Complex networks model biological \citep{la2018self, yin2020network, yang2021hidden}, social \citep{lee2006statistical, liu2017fractal}, and infrastructure systems \citep{long2021multifractal, batty1994fractal}. Summary statistics of such networks aggregate features across scales and can obscure differences in the underlying structure \citep{xue2017reliable}. In addition, these metrics can vary spuriously across different partial observations of the same network \citep{lee2006sampled,stumpf2005subnets, achlioptas2005bias}. This motivates going beyond descriptive statistics and recovering the compact generative mechanisms that reproduce the observed network. Such mechanisms are particularly valuable for revealing hidden differences underlying complex networks \citep{ yang2020controlling, akarca2021generative}. As such, recovering these generative processes is essential for enhancing our understanding of complex systems.

Generative graph models synthesize realistic networks by capturing meaningful representations of their underlying structure. Two paradigms have emerged, each with its own approach. Mechanistic models define explicit wiring rules that are algorithmically implemented to produce graphs with prescribed properties \citep{erdos1960evolution, barabasi1999emergence}. In contrast, deep generative models learn graph distributions directly from data \citep{li2018learning, kipf2016variational}.

Classical mechanistic models capture specific regimes, from homogeneous random graphs \citep{erdos1960evolution} to small-world \citep{watts1998collective} and preferential-attachment networks \citep{barabasi1999emergence}. However, these frameworks are restricted to modeling a prescribed class of networks. To overcome these constraints, recursive models such as SKG  \citep{leskovec2010kronecker}, multifractal network generators \citep{palla2010multifractal}, and WMGM \citep{yang2021hidden} established the foundation for using recursive rules to compactly describe networks observed across many domains.

However, recovering these models from data remains challenging. SKGs \citep{leskovec2010kronecker} rely on maximum likelihood estimation (MLE) with permutation sampling. Multifractal network generators \citep{palla2010multifractal} use simulated annealing, while the weighted multifractal generative model (WMGM) \citep{yang2021hidden} relies on expectation-maximization (EM). MLE and EM are susceptible to local optima \citep{wu1983convergence, blei2017variational}, whereas simulated annealing suffers from prohibitively slow convergence \citep{hajek1988cooling}. These methods also search over latent node-to-community assignments, a combinatorial step whose cost grows rapidly with graph size \citep{leskovec2010kronecker, daudin2008mixture, brandes2008modularity}. Moreover, these methods are instance-specific, so the aforementioned costs recur with every observation.

Deep generative graph models \citep{simonovsky2018graphvae, you2018graphrnn, liao2019efficient, vignac2022digress} amortize generation from data, but their learned representations are often less interpretable and typically do not extrapolate to graph sizes outside the training distribution. Autoregressive models \citep{li2018learning, you2018graphrnn, liao2019efficient} cannot extrapolate beyond the training distribution due to architectural constraints (e.g., GraphRNN's BFS window and GRAN's maximum node capacity). Variational autoencoders and diffusion-based models similarly generate graphs whose sizes are effectively limited to those observed during training \citep{simonovsky2018graphvae, niu2020permutation, vignac2022digress}.

\textbf{This work.} Taken together, there is a strong need for a framework that retains the strengths of both paradigms, such as interpretability and amortized inference, which are jointly necessary for generative models to serve in scientific applications. Generalization to unseen graph sizes further broadens its applicability. We address this need by formulating both the generative model and its inference in function space:

\begin{itemize}[leftmargin=*]
\item\textbf{Compact Function-Space Representation of Complex Networks:} We introduce the multifractal step graphon: a recursively-refined, piecewise-constant kernel parameterized by a small set of base coefficients $\theta^{(1)} = (l^{(1)}, p^{(1)})$ that directly encode hierarchical community structure.
\item\textbf{Neural Inverse Operator Formulation:} We cast the parametric graphon recovery as an inverse operator-learning problem. This formulation enables amortized inference that generalizes to unseen graph sizes.
\end{itemize}

\begin{figure}[t]
    \centering
    \includegraphics[width=\textwidth]{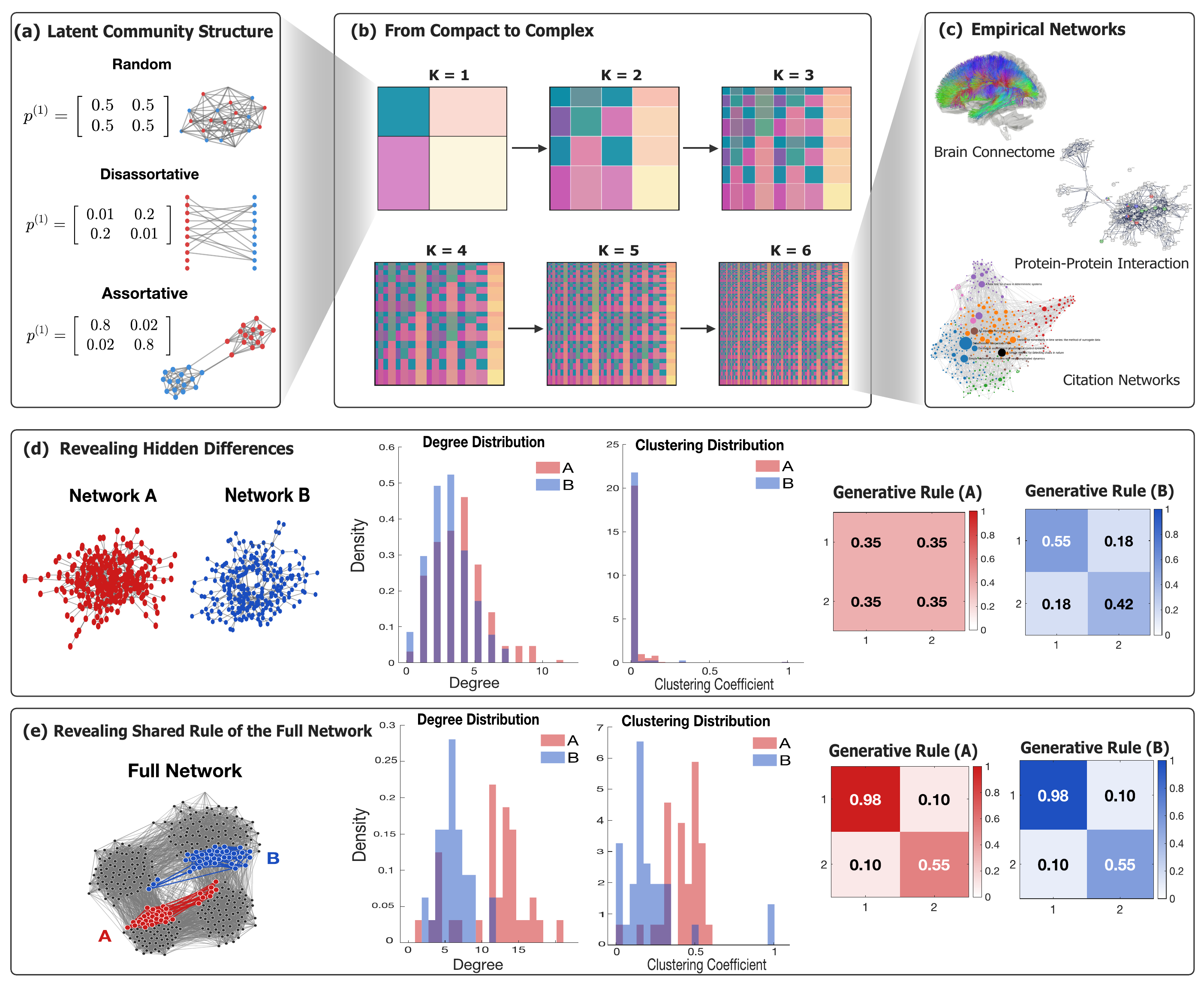}
    \caption{\textbf{Discovery of Hidden Structures in Real-World Networks} \textbf{(a)} The base graphon encodes hidden community structures underlying complex networks. \textbf{(b)} Kronecker expansion of base parameters produces the multifractal step graphon. \textbf{(c)} Finite realization of the graphon produces networks that share similar properties with those seen in various empirical domains \protect\footnotemark. Recovering the generative rule reveals \textbf{(d)} hidden differences between networks with the same statistics and \textbf{(e)} the shared latent structure of a network from different partial observations, where subsampling can produce different statistics.}
    \label{fig:schematic}
    \vspace{-5mm}
\end{figure}

\footnotetext{Figure \ref{fig:schematic} incorporates a brain tractography image by \citet{yeh2018population} [\href{https://creativecommons.org/licenses/by-sa/4.0/}{CC BY-SA 4.0}] and a protein interaction network by \citet{chen2019construction} [\href{https://creativecommons.org/licenses/by/4.0/}{CC BY 4.0}] via Wikimedia Commons. The citation network by citationgraph.org is Public Domain (CC0). This derived schematic is distributed under CC BY-SA 4.0.}
\vspace{-1mm}

\section{Related Work}
\paragraph{Graphons.} A graphon is a bounded, measurable function $W: [0,1]^2 \to [0,1]$ encoding connection probabilities between latent node positions \citep{lovasz2006limits, borgs2008convergent}. Step graphons approximate this continuous space as piecewise-constant surfaces, admitting an interpretable parametrization in which latent communities and connection rules correspond to domain partitions and uniform probabilities within each block, respectively \citep{airoldi2013stochastic}.  Prior work on graphon estimation has been limited to instance-specific statistical methods \citep{airoldi2013stochastic, chan2014consistent}. Moreover, standard step graphons implicitly assume non-hierarchical topologies and cannot represent the recursive, self-similar structure present in many real-life networks \citep{xue2017reliable}.

\paragraph{Neural Operators.} Recently, neural operators have emerged as powerful frameworks for learning the mapping between function spaces, with applications primarily in solving forward partial differential equations \citep{lu2021learning, kovachki2023neural, li2021fourier, gupta2021multiwavelet, xiao2023coupled}. \emph{Inverse} operator learning, in which the network recovers underlying parameters from functions, has been applied to PDE inverse problems \citep{molinaro2023neural}. However, no other work has applied it directly to graphon estimation to the best of our knowledge.

\paragraph{Graph Generation.} The complementary limitations of mechanistic and deep generative graph models are discussed in Section~1. Existing graphon-based generative models offer either interpretability, through a linear factorization over learned graphon factors in a graphon autoencoder (GNAE) \citep{xu2021learning} or flexibility across graph sizes by representing graphons as implicit neural representations at arbitrary resolution \citep{xia2023implicit, azizpour2024scalable}. Our approach provides both. The method's parameters define an explicit wiring rule with a closed-form link to network properties (Appendix~\ref{app:closed-form}), whereas GNAE's interpretability lies in its learned latent factors. Its function-space formulation further admits a neural inverse operator that generalizes across graph sizes.

\section{Multifractal Step Graphon Formulation}

\subsection{Overview}
Here, we extend step graphons \citep{airoldi2013stochastic} to better represent hierarchical structure observed in real-life networks \citep{xue2017reliable}. We formalize this through a recursively refined piecewise-constant kernel on the unit square, which gives a continuum model parameterized by $\theta^{(1)} = (l^{(1)}, p^{(1)})$. The parameters $l^{(1)}$ and $p^{(1)}$ yield heterogeneous node attribution and weighted connectivity, respectively. $M$ defines the number of base communities. A Kronecker recursion expands these base parameters to arbitrary depth $K$, partitioning the latent space into $M^K$ communities whose sizes and interaction strengths factorize multiplicatively across scales. We use $M=2$ throughout, which keeps the $M^K$ communities populated in tractable training graphs. Because two recursion steps together form a valid $M=4$ base rule, this choice still covers the subset of larger-$M$ models whose base rule is Kronecker-factorizable (Appendix~\ref{app:choice_M}).

We model generation of graphs from the multifractal step graphon as realizations of a jointly exchangeable array. By the Aldous representation theorem \citep{aldous1981representations}, any such array admits a generative representation as a measurable function $W: [0,1]^2 \to [0,1]$ from which finite graphs are sampled. This provides the foundation for graphon-based generative modeling. Thus, our use of the graphon follows the Aldous tradition---as a generative kernel from which finite networks are sampled---rather than the Lovász–Szegedy tradition of graph limits.

\subsubsection{Continuous Latent Space}
A graphon is a bounded, Lebesgue-measurable function $W: [0,1]^2 \to [0,1]$ belonging to the Hilbert space $L^2([0,1]^2)$ \citep{lovasz2006limits, borgs2008convergent}. While traditionally defined as symmetric ($W(x,y) = W(y,x)$) for undirected graphs, we follow theoretical \citep{aldous1981representations} and empirical \citep{airoldi2013stochastic} precedents and generalize to asymmetric kernels ($W(x,y) \neq W(y,x)$) to capture directed topologies. In this formulation, $W(x,y)$ represents the probability of a directed edge from a node at latent coordinate $x$ to a node at $y$.

\subsubsection{Step Graphon Parameterization}
A step graphon approximates this continuous space as a piecewise-constant surface. We partition the unit interval $[0,1]$ into $M$ disjoint subsets $\{I_1, \dots, I_M\}$. A probability vector $l \in \Delta^{M-1}$ defines the length of each subset such that $\lambda(I_i) = l_i$. The connection probability between subsets is defined by a blockwise affinity matrix $p \in [0,1]^{M \times M}$. The step graphon is then:
\begin{equation} \label{eq:step_graphon}
    W(x,y) = \sum_{i=1}^M \sum_{j=1}^M p_{ij}\, \mathbb{I}_{I_i}(x)\, \mathbb{I}_{I_j}(y)
\end{equation}
where $\mathbb{I}$ is the indicator function.

\subsubsection{Multifractal Connectivity via Kronecker Recursion}
\label{sec:multifractal-extension}

Standard step graphons implicitly assume non-hierarchical topologies. However, real-world networks exhibit varying levels of heterogeneity across different scales \citep{xue2017reliable}. This heterogeneity is formalized by the Lipschitz--H\"older exponent $\alpha$, which measures how the local connectivity scales with resolution. A \emph{monofractal} object has the same scaling behavior everywhere (a single $\alpha$), while a \emph{multifractal} object exhibits a distribution of scaling behaviors, characterized by the spectrum $f(\alpha)$ as its width ($\Delta\alpha = \alpha_{\max} - \alpha_{\min}$) and shape.

To capture this self-similar structure, we recursively expand the base parameters $\theta^{(1)} = (l^{(1)}, p^{(1)})$, where $l^{(1)} \in \text{int}(\Delta^{M-1})$ and $p^{(1)} \in (0,1)^{M \times M}$, to a multiscale depth $K$ via the Kronecker product:
\begin{equation}
l^{(K)} = \underbrace{l^{(1)} \otimes \cdots \otimes l^{(1)}}_{K \text{ times}}, \quad p^{(K)} = \underbrace{p^{(1)} \otimes \cdots \otimes p^{(1)}}_{K \text{ times}}
\end{equation}
This recursive expansion partitions the space into $M^K$ contiguous sub-intervals indexed by $q = (q_1, \ldots, q_K)$. The multifractal step graphon is defined as:
\begin{equation}
W_K(x, y) = \sum_q \sum_h p^{(K)}_{qh} \, \mathbb{I}_{I_q}(x) \, \mathbb{I}_{I_h}(y)
\end{equation}
where the interaction amplitude factorizes across scales as $p^{(K)}_{qh} = \prod_{k=1}^{K} p^{(1)}_{q_k h_k}$. This multiplicative factorization enables varying levels of heterogeneity across scales: $\log p^{(K)}_{qh}$ is a sum of $K$ terms from $\log p^{(1)}$, so different index paths yield different local scaling exponents. Uniform entries in $p^{(1)}$ collapse to a single exponent ($\Delta\alpha = 0$, monofractal), while heterogeneous entries widen the spectrum. Thus, $\theta^{(1)}$ compactly controls where $W_K$ lies on the monofractal--multifractal continuum. The Kronecker structure also yields closed-form expressions for several network statistics, such as the multifractal spectrum, derived in Appendix~\ref{app:closed-form}.

\begin{figure}[t]
    \centering
    \includegraphics[width=\linewidth]{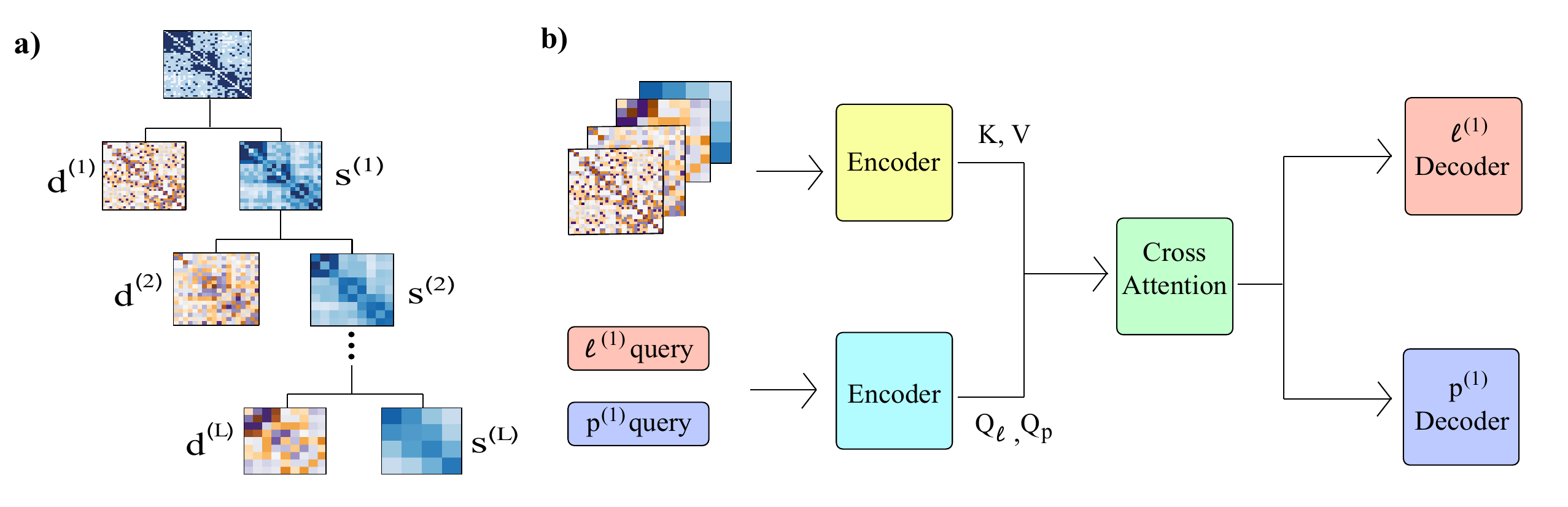}
    \caption{
        \textbf{Neural inverse operator architecture.} \textbf{(a)} The input adjacency matrix is decomposed across $L$ scales by a 2D multiwavelet transform, yielding hierarchical coefficients that are tokenized and processed by a branch network (Keys and Values). \textbf{(b)} Learnable queries for $\ell^{(1)}$ and $p^{(1)}$ pass through a trunk network and are combined with the branch via cross-attention. Scale-specific outputs are decoded by MLPs to predict the base parameters.
    }
    \label{fig:mwnot_architecture}
    \vspace{-4mm}
\end{figure}

\subsubsection{Sampling Networks from the Multifractal Step Graphon}

A finite graph $G = (V, E)$ is constructed by sampling from the multifractal
step graphon $W_K$. By the Aldous representation, each node $u$ is associated
with a latent coordinate $x_u \sim \text{Uniform}(0, 1)$, which falls in
the interval $I_{q_u}$ with probability $l^{(K)}_{q_u}$. Because $W_K$ is
piecewise-constant on these intervals, this is equivalent to drawing
community assignments directly:
\begin{equation}
    q_u \sim \text{Multinomial}\bigl(l^{(K)}\bigr),
\end{equation}
which is the form used in implementation. The continuous interaction
intensity for pair $(u, v)$ is then the constant value of $W_K$ on the
block $I_{q_u} \times I_{q_v}$:
\begin{equation}
    W_{uv} = W_K(x_u, x_v) = p^{(K)}_{q_u q_v}.
\end{equation}
To map this continuous amplitude to a discrete edge weight $r_{uv} \in \mathbb{N}_0$,
we sample from a conditional geometric distribution:
\begin{equation}
    P(r_{uv} = r \mid W_{uv}) = (1 - W_{uv})(W_{uv})^r.
\end{equation}
Higher connection probabilities naturally produce heavier-tailed edge weights
while maintaining support on all non-negative integers.

\subsubsection{Identifiability}
Inverse problems are typically ill-posed \citep{molinaro2023neural}. We show identifiability (Proposition 1) of our formulation, which ensures that the inverse map is well-defined and distinct parameters produce distinct graphons.

\begin{proposition}[Identifiability of the multifractal step graphon]
\label{prop:identifiability}
Assume additionally that the expanded block profiles of $p^{(K)}$ are pairwise distinct, so that no two blocks have identical incoming and outgoing interaction profiles. Under this generic non-degeneracy condition, let $M \geq 2$ and $K \geq 2$. Consider two base parameterizations $\theta_1 = (l_1^{(1)}, p_1^{(1)})$ and $\theta_2 = (l_2^{(1)}, p_2^{(1)})$, where $l_i \in \mathrm{int}(\Delta^{M-1})$ and $p_i \in (0,1)^{M \times M}$. Let $W_K^{(1)}$ and $W_K^{(2)}$ denote the corresponding multifractal step graphons obtained via $K$-fold Kronecker expansion. If $W_K^{(1)} = W_K^{(2)}$ almost everywhere on $[0,1]^2$, then there exists a permutation $\sigma \in S_M$ such that $l_1^{(1)} = \sigma(l_2^{(1)})$ and $p_1^{(1)} = \sigma(p_2^{(1)})$, where $\sigma$ acts by reindexing: $\sigma(l)_i = l_{\sigma(i)}$ and $\sigma(p)_{ij} = p_{\sigma(i)\sigma(j)}$.
\end{proposition}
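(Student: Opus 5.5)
The plan is to recover the block structure of $W_K$ from the function itself, and then undo the Kronecker expansion. Note first that $W_K^{(i)}$ is the step function on the contiguous partition $\{I_q\}$ with lengths $l_i^{(K)}$. Because $l_i^{(1)}\in\mathrm{int}(\Delta^{M-1})$, every sub-interval has positive length.

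\textbf{Step 1: recover the partition.} Under the non-degeneracy assumption, no two blocks have identical row-and-column profiles. So the coarsest partition of $[0,1]$ on which $W_K^{(i)}$ is a.e.\ constant in both arguments is exactly $\{I_q\}$, up to null sets. To make this precise, define $x\sim x'$ when $W(x,\cdot)=W(x',\cdot)$ and $W(\cdot,x)=W(\cdot,x')$ a.e. This relation is intrinsic to $W$. Its classes are the $I_q$, since distinct profiles separate different blocks.

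Equality $W_K^{(1)}=W_K^{(2)}$ a.e.\ then gives a bijection $\tau$ between the index sets $[M]^K$. Matched blocks have equal Lebesgue measure, so $l_1^{(K)}(q)=l_2^{(K)}(\tau(q))$, and the block values agree, so $p^{(K)}_{1,qh}=p^{(K)}_{2,\tau(q)\tau(h)}$.

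The catch is that $\tau$ need not be a ``digit-wise'' permutation $\sigma^{\otimes K}$. Contiguity of the intervals helps here: both partitions are ordered left to right in lexicographic order of $q$. Matching the interval endpoints forces $\tau$ to preserve order, so $\tau=\mathrm{id}$. This is because $l$ gives lengths of contiguous subsets in a fixed index order. In that reading, $\sigma$ is simply the identity, and the permutation freedom in the statement covers the case where the labelling of the $I_i$ is arbitrary. To stay safe, I would also handle a general measure-preserving relabelling $\tau$ of blocks.

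\textbf{Step 2: undo the Kronecker product.} Given $\tau$, I need to show that it respects digit structure and that the factors match. I would use entrywise multiplicativity. For $K\ge 2$, the block measures satisfy $l^{(K)}(q)=\prod_k l(q_k)$, and the $M^K\times M^K$ matrix $p^{(K)}$ is a Kronecker power with all entries positive. I would take logarithms, so that $\log p^{(K)}_{qh}=\sum_k \log p_{q_kh_k}$ becomes an additive tensor. The factors of an additive (rank-one in the Kronecker sense) decomposition are unique up to the scaling ambiguity. Concretely, $A^{\otimes K}=B^{\otimes K}$ entrywise with positive entries implies $A=B$. To see this, compare the diagonal-path entries: $p_{1,ii}^K=p_{2,ii}^K$, and more generally the entries $q=(i,\dots,i)$, $h=(j,\dots,j)$ give $p_{1,ij}^K=p_{2,ij}^K$. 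Positivity lets us take the $K$-th root. The same argument applies to $l$.

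With a nontrivial $\tau$, I would first show that $\tau$ maps the constant-digit indices $(i,\dots,i)$ to constant-digit indices, inducing $\sigma$. One way is to characterize them intrinsically: the ratio structure $p^{(K)}_{qh}p^{(K)}_{q'h'}=p^{(K)}_{qh'}p^{(K)}_{q'h}$ holds for pairs differing in a single digit. Extending $\sigma$ digit-wise then gives $l_1=\sigma(l_2)$ and $p_1=\sigma(p_2)$.

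\textbf{Main obstacle.} The hard part is the combinatorial claim that an arbitrary profile-preserving block bijection $\tau$ must be of the form $\sigma^{\otimes K}$. The $K$-th-root argument is routine once the diagonal-path blocks are matched. If the intrinsic characterization of digit structure fails generically, I would fall back on the contiguous-order argument, which forces $\tau$ to be the identity and makes the result immediate.
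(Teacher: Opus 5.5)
Your proof is correct. The argument that actually carries it is the fixed-order one. Because the recursive refinement lays out the $I_q$ in lexicographic order of $q$ for both parameterizations, the intrinsic breakpoints force $l_1^{(K)}=l_2^{(K)}$ and $p_1^{(K)}=p_2^{(K)}$ entrywise; under the non-degeneracy assumption every breakpoint is visible. Reading off the constant-digit entries $l_i^K$ and $p_{ij}^K$ and taking positive $K$-th roots then gives $l_1^{(1)}=l_2^{(1)}$ and $p_1^{(1)}=p_2^{(1)}$, so $\sigma=\mathrm{id}$ works.

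The paper shares your Step 1, recovering the boundaries from the discontinuities of $W_K$. It then treats the block labels as free and carries an arbitrary $\pi\in S_{M^K}$ into Step 2. There it argues, via rank-1 uniqueness of matricizations, that $\pi$ must be induced digit-wise by some $\sigma\in S_M$. Your route is more elementary and yields a sharper conclusion. The paper's route targets the arbitrary-labelling setting, but its key claim is false as stated, and it is exactly the claim you list as your main obstacle. The coordinate permutations $(q_1,\dots,q_K)\mapsto(q_{\rho(1)},\dots,q_{\rho(K)})$, $\rho\in S_K$, preserve $l^{(K)}$ and $p^{(K)}$ simultaneously, yet are not of the form $\sigma^{\otimes K}$ when $\rho\neq\mathrm{id}$. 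The proposition survives, but not through that step.

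So do not try to prove $\tau=\sigma^{\otimes K}$. Your single-digit ratio identity also does not intrinsically characterize constant-digit indices. If you want the arbitrary-labelling version, you only need the base weighted digraphs to be isomorphic, and homomorphism densities give this directly:
\begin{itemize}
\item $t(F,W_K)=t(F,W_1)^K$ for every directed multigraph $F$, and these densities are invariant under relabelling;
\item positive $K$-th roots are unique, so $t(F,W_1^{(1)})=t(F,W_1^{(2)})$ for all $F$;
\item a Lov\'asz-type theorem for twin-free weighted graphs with positive node weights then yields a single $\sigma$.
\end{itemize}
Twin-freeness of the base follows from your non-degeneracy assumption. Base twins $i\neq i'$ would produce the twin blocks $(i,j,\dots,j)$ and $(i',j,\dots,j)$.
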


The proof, given in Appendix~\ref{app:identifiability}, proceeds in three steps: (i) the piecewise-constant graphon determines its block structure up to permutation via its discontinuities. (ii) the Kronecker factorization of the expanded parameters into base parameters is unique, by rank-1 symmetric tensor decomposition. (iii) the coupling between $l^{(K)}$ and $p^{(K)}$ through shared block indices forces the permutation to be consistent across both.

\section{Inference via Neural Operators}
\label{sec:inference}
Recovering $\theta^{(1)} = (l^{(1)}, p^{(1)})$ from an observed graph requires learning the inverse operator $\mathcal{F}: W_K \mapsto \theta^{(1)}$. The observed adjacency matrix is treated as a finite-sample realization of a graph generated from $W_K \in L^2([0,1]^2)$. Because $W_1$ is fully parameterized by $\theta^{(1)}$, recovering these parameters fully specifies the multiscale graphon $W_K$.

\begin{figure}[t]
    \centering
    \includegraphics[width=\linewidth]{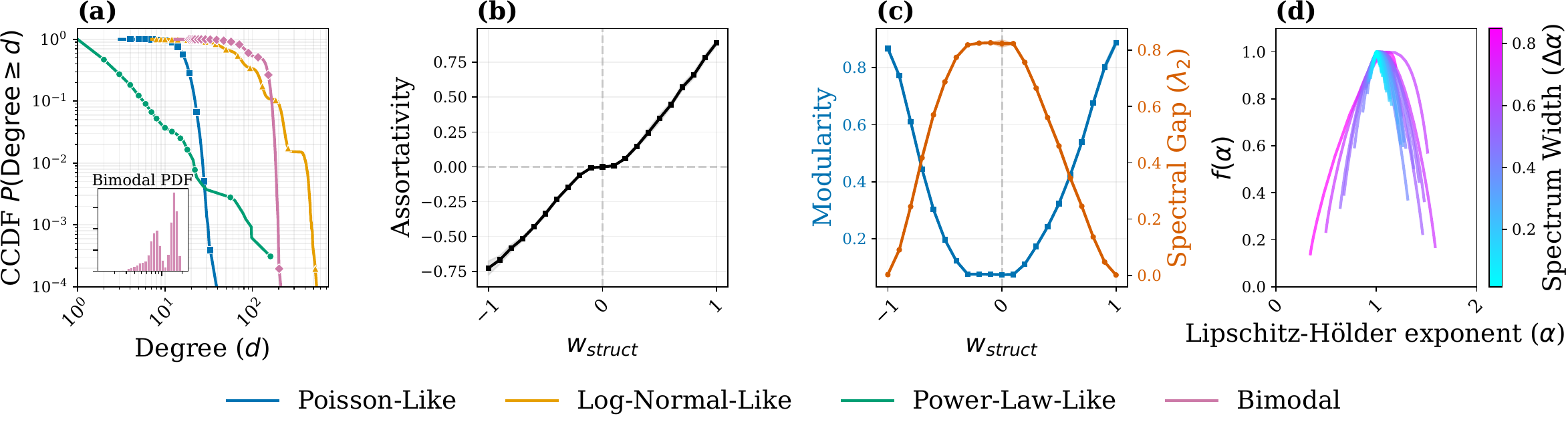}
    \caption{\textbf{Compact Parameterization of Diverse Topologies.} \textbf{(a)} Degree distributions span Poisson-like, log-normal-like, power-law-like (with exponential-tail), and bimodal regimes that are present in real-life networks. \textbf{(b)} Sweeping $w_{\mathrm{struct}}$ allows expression of varying levels of assortativity and disassortativity. \textbf{(c)} $w_{\mathrm{struct}}$ also controls the full range of modularity (left axis) and spectral gap $\lambda_2$ (right axis). \textbf{(d)} Multifractal spectra forms a continuous envelope from monofractal to multifractal.}
    \label{fig:expressivity}
    \vspace{-3mm}
\end{figure}

Multifractal step graphons are low-regularity, piecewise-constant objects, for which multiwavelet bases are known to provide efficient representations \citep{alpert1993class}. We verify empirically that the multiwavelet basis resolves the recursive structure of $W_K$: the proportion of energy in the detail-coefficients grows exponentially with recursion depth (Appendix~\ref{app:wavelet-validation}). We instantiate the inverse operator as a Multiwavelet Neural Operator Transformer (MWNOT) (Fig.\ref{fig:mwnot_architecture}), combining a 2D multiwavelet decomposition \citep{gupta2021multiwavelet, xiao2023coupled} with cross-attention for operator approximation \citep{shih2025transformers}. We use MWNOT as one architecture compatible with the proposed formulation, not as a claim of architectural optimality.

\subsection{Architecture}

First, a 2D discrete multiwavelet transform \citep{gupta2021multiwavelet} decomposes $A$ across $L$ spatial scales. The multiwavelet basis is constructed from shifted Legendre polynomials of order $k$ and orthogonalized via Gram--Schmidt so that detail coefficients at each scale are orthogonal to the coarse approximation. At each scale $\ell$, the transform yields coarse coefficients $s^{(\ell)}$ capturing macroscopic connectivity and detail coefficients $d^{(\ell)}$ capturing high-frequency variation.

Second, we adopt a branch--trunk cross-attention architecture \citep{lu2021learning, shih2025transformers} to map these multiscale coefficients to $\theta^{(1)}$. The branch network encodes the input function through its wavelet tokens, producing keys and values. The trunk network processes learnable query embeddings for $l^{(1)}$ and $p^{(1)}$. Cross-attention approximates the operator through a linear combination of input-derived features weighted by target-dependent queries \citep{shih2025transformers}.

Third, scale-specific attention outputs are concatenated across $\ell$ and decoded by MLPs to predict $\hat{l}^{(1)}$ and $\hat{p}^{(1)}$. Full architectural details are provided in Appendix~\ref{app:architecture}.

\subsection{Training}
Training details, including data generation, loss functions, optimization schedule, and hyperparameter settings, are provided in Appendix~\ref{app:training}.

\section{Experiments}
The central claim of this work is that the multifractal step graphon, paired with a neural inverse operator trained only on its synthetic realizations, recovers interpretable parameters of real-world networks through amortized inference that generalizes across graph sizes. We support this claim through a series of experiments evaluating expressivity (Sec.~\ref{sec:expressivity}), parameter recovery on unseen graph sizes (Sec.~\ref{sec:recovery}), and zero-shot transfer to empirical data of various domains and sizes (Sec.~\ref{sec:benchmarks}). Finally, we apply our framework to single-observation empirical networks (Sec.~\ref{sec:emp_data}), where the inferred parameters reconstruct each network comparably to per-instance fitting and track a reversible change in brain state in a multi-subject EEG case study.

\subsection{Compact Parameterization of Diverse Topologies}
\label{sec:expressivity}

We empirically validate that the multifractal step graphon family covers various topologies observed in real-world networks. We sweep two axes of $\mathbf{p}^{(1)}$: a heterogeneity parameter $w_{\mathrm{het}} \in [0,1]$ controlling block-density variation, and a structure parameter $w_{\mathrm{struct}} \in [-1,1]$ interpolating from disassortative to assortative community organization. Sweep details are in Appendix~\ref{app:expressivity_setup}.

Figure~\ref{fig:expressivity} confirms that varying $\boldsymbol{\theta}^{(1)}$ alone produces networks with several qualitatively distinct regimes of (a) degree distributions \citep{broido2019scale}, (b)  assortativity, (c) modularity with inverse relationship to spectral gap, and (d) multifractal spectrum width, spanning from monofractal to multifractal. Together, these results suggest that the model provides a broad synthetic pretraining distribution that may enable transfer to real-world networks in various domains.

\begin{wrapfigure}{r}{0.5\textwidth}
    \vspace{-10mm}
    \centering
    \includegraphics[width=0.5\textwidth]{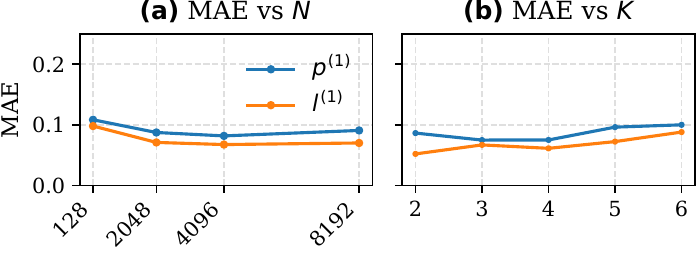}
    \caption{\textbf{Model Accuracy on Synthetic Graphs.} \textbf{(a)} The model generalizes to unseen graph sizes and \textbf{(b)} shows stable performance across varying levels of hierarchical structure.}
    \label{fig:recovery}
    \vspace{-3mm}
\end{wrapfigure}

\subsection{Zero-Shot Transfer to Unseen Size}
\label{sec:recovery}
We next verify that the neural inverse operator recovers $\boldsymbol{\theta}^{(1)}$ accurately across the conditions encountered at deployment: varying graph sizes $N$ and recursive depths $K$. A single MWNOT is trained on $N \in \{128, 512, 1024\}$ and $K \in \{2, 4, 6\}$, then evaluated on 100 held-out test samples per setting where performance is averaged across $K \in \{2, \ldots, 6\}$ for each $N$. We test our model's ability to generalize to unseen graph sizes. MAE for $\mathbf{p}^{(1)}$ and $\mathbf{l}^{(1)}$ remains stable as size is changed from $N=128$ to $N=8192$ (Fig.~\ref{fig:recovery}a). Consequently, our model allows the user to train on small graphs for inference on larger ones, which may enhance the broad applicability of the model. Recovery remains stable across hierarchical depths as well.

\begin{table}[t]
\caption{Zero-shot Performance Comparison}
\label{tab:twocolumn_results}
\centering
\resizebox{\textwidth}{!}{%
\begin{tabular}{llcccc | llcccc}
\toprule
\textbf{Domain} & \textbf{Method} & \textbf{DEG} & \textbf{CC} & \textbf{Spec} & \textbf{Orb} & \textbf{Domain} & \textbf{Method} & \textbf{DEG} & \textbf{CC} & \textbf{Spec} & \textbf{Orb} \\
\midrule
FB & DiGress & 0.3376 & 0.6298 & 0.0797 & 0.3593 & BIO & DiGress & 0.2712 & 0.5202 & 0.1127 & 0.3188 \\
& LGGM-X & 0.4723 & 0.6843 & 0.2924 & 0.7555 & & LGGM-X & 0.1081 & 0.2696 & 0.0900 & 0.2053 \\
& Ours-Mono & 0.2782 & 0.3045 & 0.0308 & 0.0412 & & Ours-Mono & 0.0879 & 0.1467 & 0.0827 & 0.1584 \\
& \textbf{Ours} & \textbf{0.0589} & \textbf{0.0500} & \textbf{0.0284} & 0.0403 & & \textbf{Ours} & \textbf{0.0442} & \textbf{0.0605} & \textbf{0.0629} & \textbf{0.1090} \\
\midrule
ASN & DiGress & 0.1496 & 0.3258 & 0.1506 & 0.4420 & ECON & DiGress & 0.2987 & 0.4841 & 0.2162 & 0.3834 \\
& LGGM-X & \textbf{0.0281} & 0.2440 & \textbf{0.0830} & \textbf{0.0618} & & LGGM-X & 0.1213 & 0.0920 & 0.1120 & 0.1086 \\
& Ours-Mono & 0.1350 & 0.1375 & 0.1484 & 0.5197 & & Ours-Mono & 0.1011 & 0.1092 & 0.0707 & 0.0916 \\
& \textbf{Ours} & 0.0780 & 0.1281 & 0.1003 & 0.3012 & & \textbf{Ours} & \textbf{0.0780} & \textbf{0.0474} & \textbf{0.0199} & \textbf{0.0598} \\
\midrule
EMAIL & DiGress & 0.2192 & 0.6012 & 0.0702 & 0.3416 & RT & DiGress & 0.4164 & \textbf{0.1327} & 0.4147 & 0.5957 \\
& LGGM-X & 0.0751 & 0.2364 & 0.0768 & 0.3089 & & LGGM-X & \textbf{0.0525} & 0.1429 & \textbf{0.1330} & 0.2219 \\
& Ours-Mono & 0.1396 & 0.0921 & 0.0600 & 0.0765 & & Ours-Mono & 0.2911 & 0.4537 & 0.3705 & 0.3612 \\
& \textbf{Ours} & \textbf{0.0631} & \textbf{0.0529} & \textbf{0.0563} & \textbf{0.0725} & & \textbf{Ours} & 0.1999 & 0.4070 & 0.2345 & 0.2452 \\
\midrule
WEB & DiGress & 0.2556 & 0.6186 & 0.1877 & 0.6045 & COL & DiGress & 0.2473 & 0.5826 & 0.2314 & 0.7679 \\
& LGGM-X & \textbf{0.0648} & 0.3961 & \textbf{0.0549} & \textbf{0.1127} & & LGGM-X & \textbf{0.0736} & 0.5769 & \textbf{0.0895} & \textbf{0.0988} \\
& Ours-Mono & 0.1553 & 0.1456 & 0.1749 & 0.3319 & & Ours-Mono & 0.2143 & 0.3022 & 0.2329 & 0.7161 \\
& \textbf{Ours} & 0.0692 & \textbf{0.0795} & 0.1145 & 0.2760 & & \textbf{Ours} & 0.1020 & \textbf{0.1683} & 0.1863 & 0.6032 \\
\midrule
ROAD & DiGress & 0.3705 & 0.8226 & 0.2801 & 0.7198 & ECO & DiGress & 0.5431 & 0.7915 & 0.2338 & 0.6045 \\
& LGGM-X & \textbf{0.0713} & 0.2193 & \textbf{0.0987} & \textbf{0.2986} & & LGGM-X & 0.4753 & 0.3904 & 0.3194 & 0.3934 \\
& Ours-Mono & 0.4300 & 0.6884 & 0.3142 & 0.7433 & & Ours-Mono & 0.1744 & 0.3025 & 0.0436 & 0.2044 \\
& \textbf{Ours} & 0.1107 & \textbf{0.1850} & 0.1446 & 0.6027 & & \textbf{Ours} & \textbf{0.0632} & \textbf{0.2298} & \textbf{0.0326} & 0.2040 \\
\midrule
POWER & DiGress & 0.3726 & 0.4582 & 0.3270 & 1.4732 & CITATION & DiGress & 0.2527 & 0.7790 & 0.1315 & 0.4966 \\
& LGGM-X & \textbf{0.0119} & \textbf{0.1293} & \textbf{0.0373} & \textbf{0.0754} & & LGGM-X & 0.1348 & 0.7257 & 0.1160 & 0.4981 \\
& Ours-Mono & 0.1963 & 0.3947 & 0.2809 & 0.9536 & & Ours-Mono & 0.1364 & 0.2654 & 0.0917 & 0.1601 \\
& \textbf{Ours} & 0.1660 & 0.2992 & 0.2265 & 0.6956 & & \textbf{Ours} & \textbf{0.0407} & \textbf{0.1150} & \textbf{0.0621} & \textbf{0.1010} \\
\midrule
ALL & DiGress & 0.3112 & 0.5622 & 0.2030 & 0.5923 & & & & & & \\
& LGGM-X & 0.1408 & 0.3422 & 0.1253 & \textbf{0.2616} & & & & & & \\
& Ours-Mono & 0.1950 & 0.2785 & 0.1584 & 0.3632 & & & & & & \\
& \textbf{Ours} & \textbf{0.0895} & \textbf{0.1519} & \textbf{0.1057} & 0.2759 & & & & & & \\
\bottomrule
\end{tabular}
}
\par\smallskip
\begin{minipage}{\linewidth}\footnotesize\raggedright
DEG, CC, Spec, Orb: MMD of degree, clustering coefficient, Laplacian eigenvalues and orbit counts. Ours and Ours-Mono are means over 10 seeds. Full table with SDs in Table~\ref{tab:lggm_full}). On ECO, the orbit kernel saturates, giving the same value for any set of distinct dense samples. This cell is not used to mark the best method.
\end{minipage}
\vspace{-6.5mm}
\end{table}
\subsection{Zero-Shot Transfer to Empirical Data}

\subsubsection{Graph Generation Benchmark}
\label{sec:benchmarks}
The preceding sections show that the model spans diverse topologies (Sec.~\ref{sec:expressivity}) and that its parameters are recovered accurately in graph sizes not seen during training (Sec.~\ref{sec:recovery}). We now test whether these combine to transfer to real-world graphs, using a model trained only on synthetic graphon realizations.  We use the zero-shot graph-generation benchmark and baselines from \citet{wang2025large}. LGGM-X is pretrained on over $5{,}000$ empirical graphs, with the target domain $X$ held out. We use the same code, test sets, and metrics, and report baseline results from \citet{wang2025large}. The complete protocol is in Appendix~\ref{app:lggm_protocol}. As an ablation, \textbf{Ours-Mono} is trained on graphons with $p^{(1)} \equiv 0.5$, removing the topological diversity of the prior (Sec.~\ref{sec:expressivity}).

Our model achieves the best average performance on three of four metrics (``ALL'' in Table~\ref{tab:twocolumn_results}) and wins in all metrics in 4/12 domains. These results are notable given the contrasting training paradigms. LGGM-X is trained on thousands of real-world networks across many domains, whereas our model is trained \textit{exclusively} on synthetic realizations of the multifractal step graphon. In addition, our method uses $\sim$22$\times$ fewer parameters ($\sim$205K vs.\ 4.61M). LGGM-X achieves stronger performance on some domains, such as infrastructure networks, where our method may struggle as it was trained on synthetic graphs without geometric priors. Averaged across domains, the ablation increases MMD on all four metrics, indicating that the topological diversity of the prior contributes to the transfer. Taken together, the results suggest that the generative rule and the inference of its parameters transfer to real-world networks in various domains.

\subsubsection{Applications to Single-Observation Empirical Data}
\label{sec:emp_data}
Recent advances in connectomics have produced datasets with unprecedented scale, such as the complete connectome of \textit{Drosophila} \citep{dorkenwald2024neuronal}. However, these data often exist as a single observation. Whole-graph generation on these datasets has been inaccessible to existing deep generative models since they require training corpora, and foundation graph models are still in development \citep{wang2025graph}. In order to bridge this gap, we assess our model's ability to infer meaningful structures in such datasets, with sizes ranging from $N = 163$ to $N = 6301$.

\textbf{Evaluation for single-graph datasets.} Standard graph generation evaluation uses MMD between distributions of graph statistics computed across many graphs in test and generated sets~\citep{you2018graphrnn, vignac2022digress, wang2025large}. This metric is ill-defined at $N=1$ since no test distribution exists with a single empirical graph. We instead compare intra-graph distributions between empirical and synthetic networks using 1-Wasserstein distances on degree distributions and Laplacian spectra, as well as mean squared error on the multifractal spectra (Appendix ~\ref{app:metrics_definitions}).

\textbf{Reference comparison with per-instance fitting.}  We calibrate against WMGM~\citep{yang2021hidden}, which is to our knowledge the only generative-rule estimator for self-similar networks with weighted edges and node heterogeneity. WMGM optimizes on each network separately via EM. WMGM thus serves as a reference for what direct fitting achieves, not as a competing baseline. Our zero-shot method is better than or tied with WMGM on 10 of 18 comparisons (2 better, 8 tied, 8 worse), while running 9--550$\times$ faster (Table~\ref{tab:wmgm_compare}). Differences are largest in spectral error, where WMGM is better on four of six networks. This indicates that our zero-shot method reconstructs the multifractal and degree structure of real networks comparably to direct fitting, with a remaining gap in spectral structure.

\begin{table*}[t]
\centering
\caption{Reconstruction of Single-Instance Networks}
\label{tab:wmgm_compare}
\resizebox{\textwidth}{!}{%
\begin{tabular}{llcccc|llcccc}
\toprule
Domain & Method & Time (ms) & MFrac & DegW & SpecW & Domain & Method & Time (ms) & MFrac & DegW & SpecW \\
\midrule
C. elegans & WMGM & 2060 & 2.68e-2 & \textbf{5.67e-4} & \textbf{6.97e-2} & P2P Gnutella & WMGM & 2540 & 5.19e-3 & 1.41e-4 & \textbf{8.09e-2} \\
 & Ours & \textbf{3.76} & \textbf{2.19e-4} & 1.20e-3 & 1.27e-1 &  & Ours & \textbf{198} & 3.56e-3 & 1.23e-4 & 1.24e-1 \\
\midrule
Drosophila & WMGM & 1930 & 7.35e-3 & 2.29e-4 & \textbf{6.41e-2} & Power Grid & WMGM & 1810 & 5.88e-4 & 4.23e-5 & \textbf{1.81e-1} \\
 & Ours & \textbf{14.1} & 3.84e-3 & 2.69e-4 & 1.18e-1 &  & Ours & \textbf{198} & 1.24e-3 & \textbf{2.54e-5} & 2.58e-1 \\
\midrule
Allen Mouse & WMGM & 1050 & \textbf{1.52e-4} & 1.38e-4 & 3.99e-3 & MN Roads & WMGM & 1490 & \textbf{1.73e-3} & \textbf{5.92e-5} & 1.86e-1 \\
 & Ours & \textbf{2.95} & 7.86e-3 & 9.86e-5 & 6.82e-3 &  & Ours & \textbf{50.9} & 2.42e-3 & 1.24e-4 & 1.84e-1 \\
\bottomrule
\end{tabular}
} % close resizebox
\par\smallskip
\begin{minipage}{\textwidth}\footnotesize\raggedright
MFrac: multifractal MSE; DegW: degree Wasserstein; SpecW: spectral Wasserstein. Error metrics are means over 10 seeds, with the full table with SD values reported in Table~\ref{tab:wmgm_full}. Times are defined in Appendices~\ref{app:training} and~\ref{app:single_instance}. Bold marks the lower value where the two methods' mean $\pm$ SD intervals do not overlap.
\vspace{-2mm}
\end{minipage}
\end{table*}

\begin{table}[t]
\centering
\caption{Revealing Hidden Differences in EEG Networks Under Sedation.}
\label{tab:eeg_alpha_groups}
\small
\setlength{\tabcolsep}{4pt}
\resizebox{\linewidth}{!}{%
\begin{tabular}{lcccc cccc}
\toprule
Measure & Baseline & Mild & Moderate & Recovery & $d$ Base\,$-$\,Mod & $d$ Mild\,$-$\,Mod & $d$ Rec\,$-$\,Mod & $d$ Base\,$-$\,Rec \\
\midrule
\multicolumn{9}{l}{\textit{Inferred generative rule $\hat\theta^{(1)}$}} \\
Assortativity index & $-0.75 \pm 0.32$ & $-0.74 \pm 0.27$ & $-0.48 \pm 0.34$ & $-0.76 \pm 0.29$ & $\mathbf{-0.84}$ (0.043) & $\mathbf{-0.87}$ (0.043) & $\mathbf{-0.91}$ (0.036) & $0.03$ \\
$\Delta\alpha$ & $3.45 \pm 2.29$ & $3.44 \pm 2.11$ & $1.71 \pm 1.86$ & $3.55 \pm 2.11$ & $\mathbf{0.83}$ (0.048) & $\mathbf{0.87}$ (0.019) & $\mathbf{0.92}$ (0.019) & $-0.05$ \\
\midrule
\multicolumn{9}{l}{\textit{Classical network metrics}} \\
Mean clustering & $0.67 \pm 0.12$ & $0.66 \pm 0.12$ & $0.73 \pm 0.16$ & $0.68 \pm 0.11$ & $-0.40$ & $-0.45$ & $-0.36$ & $-0.07$ \\
Mean shortest path & $1.36 \pm 0.16$ & $1.37 \pm 0.16$ & $1.28 \pm 0.16$ & $1.36 \pm 0.15$ & $0.48$ & $0.57$ & $0.47$ & $0.03$ \\
Modularity & $0.053 \pm 0.045$ & $0.052 \pm 0.030$ & $0.041 \pm 0.026$ & $0.046 \pm 0.031$ & $0.32$ & $0.39$ & $0.19$ & $0.16$ \\
Spectral gap & $0.81 \pm 0.12$ & $0.81 \pm 0.10$ & $0.84 \pm 0.08$ & $0.82 \pm 0.10$ & $-0.31$ & $-0.43$ & $-0.23$ & $-0.10$ \\
\bottomrule
\end{tabular}}
\par\smallskip
\begin{minipage}{\linewidth}\footnotesize\raggedright
Values are mean $\pm$ SD over $n=20$ sessions per state; the same 20 subjects contribute to every state. $d$ is Cohen\textquotesingle s $d$ (pooled SD) for the first state minus the second. Bold marks values whose two-sided Mann\textendash Whitney $U$ test survives Holm correction over the four contrasts ($p<0.05$). The corrected $p$ is shown in parentheses only for significant values.
\end{minipage}
\vspace{-3 mm}
\end{table}

\textbf{Inferred parameters track brain state.} To demonstrate the utility of our approach in scientific applications, we next ask whether the inferred parameters reflect reversible changes in network organization. We analyze alpha-band EEG connectivity from 20 subjects at baseline, under mild and moderate propofol sedation, and after recovery \citep{chennu2016brain} (Appendix~\ref{app:eeg}). We focus on the alpha band (8--15\,Hz) functional network, where propofol produces its most prominent EEG changes on this frequency band \citep{purdon2013electroencephalogram}.

From $\hat{p}^{(1)}$ we derive two summaries, the assortativity index and multifractal width ($\Delta\alpha$) (Appendix~\ref{app:eeg}). Both metrics separate moderate sedation from every other state (Cohen's $|d| = 0.8$--$0.9$, the difference in means divided by the pooled SD, Holm-corrected $p < 0.05$), while values at baseline, mild sedation and recovery are similar (Table~\ref{tab:eeg_alpha_groups}). Under moderate sedation the rule becomes less disassortative and its multifractal spectrum narrows ($\Delta\alpha$: $3.4 \to 1.7$), consistent with reported changes in multiscale complexity \citep{casali2013theoretically, wilsenach2025graph} and integration \citep{luppi2021brain} under altered consciousness. Critically, both metrics return to levels similar to baseline on recovery, suggesting that they reflect a reversible, state-dependent change rather than drift over the recording session. In contrast, traditional network statistics show smaller effects ($|d| = 0.2$--$0.6$), none significant after correction. The two summaries also decode moderate sedation in unseen subjects (AUC $0.69$--$0.75$), suggesting that they carry state information that is shared between subjects (Fig.\ref{fig:eeg_supp}A). Moreover, the fit does not differ across states, indicating that the difference in inferred parameters during moderate sedation is not confounded by changes in the model fit (Fig.\ref{fig:eeg_supp}B). Lastly, 76 of 80 networks fit within the error range expected when the model is true (Fig.\ref{fig:eeg_supp}C).

\vspace{-3 mm}

\section{Discussion}

\vspace{-3mm}
Here we reiterate that comparisons with LGGM-X and WMGM serve as reference points for transfer and fit. As such, we do not claim that our formulation is intended to supersede deep or mechanistic models in general. Rather, it is a complementary method suited to settings requiring both interpretable parameters and amortized inference, such as in scientific applications when N=1 and networks are of sizes unseen during training. We also do not claim that MWNOT is the optimal inference method for this framework. Our approach has several limitations. For one, the model assumes the same connection rule repeats across scales. Thus, networks whose wiring rule varies irregularly across scales fall outside this assumption. In addition, the inverse operator is not permutation-equivariant. We resolve this with a deterministic degree-sorting canonicalization, which is always available without any metadata (Appendix~\ref{app:ordering}). Lastly, the model needs improvement in the spectral metrics. Despite these limitations, the results presented here indicate that mechanistic interpretability and amortized inference can be effectively combined within a unified framework to improve our understanding of complex networks. Future work includes making the generative rule more expressive and comparing other inverse-operator architectures to identify which yields the most accurate inference.

\clearpage
\section*{AI Use Statement}
In this work, we used generative AI tools to design and provide feedback on research methodology and experiments, implement methods, and interpret results. We have not used generative AI tools to help develop theoretical models or conceptual frameworks, formulate mathematical claims, provide critical ingredients for proving mathematical claims, assist in the writing of proofs, propose or refine hypotheses, or clean and reformat datasets, and generating synthetic datasets, assisting with translation, and supporting qualitative and thematic data analysis are not applicable to this work. Additionally, we used generative AI tools to suggest experimental parameters, create and edit software code, create scientific figures, draft parts of the paper, edit the paper to improve readability, identify relevant literature, and format references. We have reviewed all AI-assisted work. All AI-generated code, analyses and text edits were reviewed by the authors. We take responsibility for the final content of this work, including text, claims or artifacts produced with the aid of generative AI.

\section*{Reproducibility Statement}
We describe the full experimental protocol in the Appendix,
including preprocessing, training settings,
hyperparameters, and evaluation procedures. Anonymized source code to
reproduce all experiments is provided in the supplementary materials,
with instructions for running each experiment.

\clearpage
\FloatBarrier
% --- References Section ---
\bibliographystyle{iclr2027_conference}
\bibliography{references}

% --- Start of Supplemental Material ---
\clearpage
\appendix

% Reset figure and table numbering for the appendix
\renewcommand{\thefigure}{S\arabic{figure}}
\setcounter{figure}{0}
\renewcommand{\thetable}{S\arabic{table}}
\setcounter{table}{0}
\renewcommand{\theequation}{S\arabic{equation}}
\setcounter{equation}{0}

\begin{center}
    \Large\textbf{Supplemental Material}
\end{center}

% closed form
\vspace{1em}
\section{Model Selection}
\label{app:choice_M}

\paragraph{Selecting $M$.} $M$ is the number of base communities. The base graphon splits the latent space $[0,1]$ into $M$ intervals and each recursion step splits every interval into $M$ sub-intervals. The depth-$K$ graphon therefore partitions the latent space into $M^K$ blocks, and each block must contain enough nodes for its parameters to be reflected in the observed graph. For our smallest training graphs at the largest depth ($N=512$, $K=6$; training pairs $K=6$ only with $N \ge 512$), $M=2$ yields $2^6=64$ blocks, whereas $M=4$ would yield $4^6=4096$ blocks, far exceeding the number of nodes and leaving most blocks empty. Supporting larger $M$ would therefore require exponentially larger training graphs.

\paragraph{Expressivity.} Fixing $M=2$ does not restrict the model to two communities. Expanding two recursive steps, $l^{(1)}\otimes l^{(1)}$ and $p^{(1)}\otimes p^{(1)}$ form a valid $M=4$ base rule, so an $M=2$ model at depth $K$ is equivalent to the subset of the $M=4$ model class at depth $K/2$ whose base rule is Kronecker-factorizable. What an unrestricted $M=4$ model adds is capacity for non-factorizable branching rules. Periodic multirule generators (Appendix~\ref{app:compression}) offer intermediate expressivity without requiring larger training graphs, and are a natural direction for future work.

\section{Closed-Form Network Statistics under the Multifractal Step Graphon}
\label{app:closed-form}

A consequence of the Kronecker structure of $W_K$ is that several network statistics admit closed-form expressions that factorize across scales. We derive two such expressions here: the expected edge density and the expected degree of a node assigned to a given block.

\subsection{Expected edge density}

The expected edge density of $W_K$ is
\begin{equation}
\rho(W_K) = \int_0^1 \int_0^1 W_K(x, y) \, dx \, dy = \sum_{q, h} l^{(K)}_q \, l^{(K)}_h \, p^{(K)}_{qh}.
\end{equation}
Because $l^{(K)}$ and $p^{(K)}$ each factorize as $K$-fold Kronecker products, the sum decomposes into a product of $K$ identical scale-1 sums:
\begin{equation}
\rho(W_K) = \prod_{k=1}^{K} \left( \sum_{i, j} l^{(1)}_i \, l^{(1)}_j \, p^{(1)}_{ij} \right) = \rho(W_1)^K.
\end{equation}
Edge density at depth $K$ is therefore the $K$-th power of the depth-1 density, making explicit the recursive nature of the construction.

\subsection{Expected degree of a node in block $q$}

For a graph of $N$ nodes with latent coordinates drawn iid from $\text{Uniform}(0,1)$, the expected out-degree of a node assigned to block $q = (q_1, \ldots, q_K)$ is
\begin{equation}
\mathbb{E}[d^{\text{out}}_q] = N \sum_h l^{(K)}_h \, p^{(K)}_{qh} = N \prod_{k=1}^{K} \left( \sum_{j} l^{(1)}_{j} \, p^{(1)}_{q_k j} \right).
\end{equation}
The expression for in-degree is analogous, with $p^{(1)}_{q_k j}$ replaced by $p^{(1)}_{j q_k}$. Each scale-$k$ factor depends only on $q_k$, so the expected degree of a node decomposes into independent contributions from each level of the recursion.

\subsection{Multifractal spectrum}

The multifractal structure of $W_K$ admits a closed-form characterization through the partition function. Define the scale-1 moment sum
\begin{equation}
Z(q) = \sum_{i, j} l^{(1)}_i \, l^{(1)}_j \, \left(p^{(1)}_{ij}\right)^q.
\end{equation}
Because the multiplicative structure of $W_K$ makes the depth-$K$ partition function factorize as $Z_K(q) = Z(q)^K$, the mass exponent $\tau(q)$ takes the closed form
\begin{equation}
\tau(q) = -\frac{\log Z(q)}{\log M}.
\end{equation}
The Lipschitz--H\"older exponent $\alpha(q)$ and multifractal spectrum $f(\alpha)$ are then obtained via the Legendre transform:
\begin{equation}
\alpha(q) = \frac{d\tau}{dq}, \qquad f(\alpha(q)) = q\, \alpha(q) - \tau(q).
\end{equation}
The full multifractal spectrum is therefore determined by $\theta^{(1)}$ alone, independent of $K$ in the limit. This closed form makes precise the claim in Section~\ref{sec:multifractal-extension} that $\theta^{(1)}$ controls where $W_K$ lies on the monofractal--multifractal continuum: monofractal behavior corresponds to $Z(q) = c^q$ for some constant $c$, in which case $\tau(q)$ is linear and $f(\alpha)$ collapses to a single point; heterogeneity in $p^{(1)}$ produces curvature in $\tau(q)$ and a non-degenerate spectrum.

\subsection{Implication}

These factorizations show that macroscopic network statistics inherit the recursive structure of the generator: depth-$K$ observables are powers or products of depth-1 quantities, and the multifractal spectrum is fully characterized by $\theta^{(1)}$ alone. This provides analytical tools for relating $\theta^{(1)}$ to observable network properties, and confirms that the recursive construction is not merely a notational device but produces statistics that scale predictably with depth.
% ==========================================
% Section B: Identifiability Proof
% ==========================================
\section{Proof of Proposition 1 (Identifiability)}
\label{app:identifiability}

We establish identifiability through three steps: (i) the step graphon $W_K$ determines its block structure uniquely up to permutation, (ii) the Kronecker factorization of the block parameters is unique, and (iii) the combined result yields identifiability of the base parameters up to community label permutation.

\subsection{Step 1: A step graphon determines its block parameters up to permutation}

A step graphon $W_K$ is a piecewise-constant function on $[0,1]^2$ defined by a partition of $[0,1]$ into $M^K$ contiguous intervals with lengths $l^{(K)} \in \mathrm{int}(\Delta^{M^K-1})$ and a block interaction matrix $p^{(K)} \in (0,1)^{M^K \times M^K}$. We show that these parameters are determined by $W_K$ up to a simultaneous permutation of the blocks.

Since all entries of $l^{(K)}$ are strictly positive, the partition consists of $M^K$ intervals each with positive Lebesgue measure. The function $W_K$ takes the constant value $p^{(K)}_{qh}$ on the rectangle $I_q \times I_h$.

Suppose two parameterizations $(l^{(K)}_1, p^{(K)}_1)$ and $(l^{(K)}_2, p^{(K)}_2)$ yield the same graphon $W_K$ almost everywhere. Because the graphon is piecewise constant with finitely many pieces, equality a.e.\ implies equality on every rectangle of positive measure.

The partition boundaries are determined by the discontinuities of $W_K$. For a fixed $y$ in the interior of some interval $I_h$, the function $W_K(\cdot, y)$ is piecewise constant in $x$, taking the value $p^{(K)}_{qh}$ when $x \in I_q$. The discontinuities of this function occur at the boundaries between intervals $I_q$ and $I_{q+1}$ whenever $p^{(K)}_{qh} \neq p^{(K)}_{(q+1)h}$. By varying $y$ across all intervals, a boundary between $I_q$ and $I_{q+1}$ is undetectable only if $p^{(K)}_{qh} = p^{(K)}_{(q+1)h}$ for every $h$ --- that is, blocks $q$ and $q+1$ have identical interaction profiles with all other blocks.

In this case, blocks $q$ and $q+1$ are indistinguishable: they have the same size (since both parameterizations must agree on the set of interval widths) and the same interaction profile. Swapping their labels produces the same graphon. This is precisely a permutation of the block indices.

Because the intervals are contiguous (ordered left-to-right on $[0,1]$), the set of partition boundaries is a property of the function itself, not of the parameterization. Both parameterizations must use the same boundary points and therefore the same set of interval widths. The only freedom is in which interval is assigned which label. Thus, there exists a permutation $\pi \in S_{M^K}$ such that:
\begin{equation}
    l^{(K)}_{1,q} = l^{(K)}_{2,\pi(q)}, \quad p^{(K)}_{1,qh} = p^{(K)}_{2,\pi(q)\pi(h)}
\end{equation}
for all $q, h \in \{1, \dots, M^K\}$. \qed

\subsection{Step 2: Uniqueness of the Kronecker factorization}

We now show that if two base parameterizations $(l_1, p_1)$ and $(l_2, p_2)$ yield the same expanded parameters under $K$-fold Kronecker product (up to a permutation $\pi \in S_{M^K}$), then they are related by a base-level permutation $\sigma \in S_M$.

\subsubsection{Step 2a: Factorization of the length vector}

By construction, $l^{(K)} = l^{(1)} \otimes \cdots \otimes l^{(1)}$ ($K$ times). The entries of $l^{(K)}$ are indexed by multi-indices $q = (q_1, \dots, q_K) \in \{1, \dots, M\}^K$ and take the form:
\begin{equation}
    l^{(K)}_q = \prod_{k=1}^K l^{(1)}_{q_k}
\end{equation}
These are the entries of a rank-1 symmetric tensor $l^{(1)} \otimes \cdots \otimes l^{(1)} \in \mathbb{R}^{M \times \cdots \times M}$. We show that $l^{(1)}$ is recoverable from this tensor up to permutation of its entries.

Consider the $M \times M^{K-1}$ matricization of $l^{(K)}$, denoted $L_{(1)}$, obtained by unfolding along the first mode:
\begin{equation}
    L_{(1)} = l^{(1)} \cdot \left( l^{(1)} \otimes \cdots \otimes l^{(1)} \right)^\top
\end{equation}
Since all entries of $l^{(1)}$ are strictly positive, $L_{(1)}$ has rank 1 and its column space is spanned by $l^{(1)}$. The simplex constraint $\sum_i l^{(1)}_i = 1$ fixes the scale uniquely. The only remaining ambiguity is permutation of the entries of $l^{(1)}$.

To see why the permutation must act consistently across all modes: a permutation $\pi \in S_{M^K}$ of the multi-indices that maps one Kronecker product to another must act independently on each coordinate of the multi-index. If $\pi$ maps $(q_1, \dots, q_K)$ such that the first coordinate $q_1$ is permuted by $\sigma$ in one multi-index but left unchanged in another, the multiplicative structure $l^{(K)}_q = \prod_k l^{(1)}_{q_k}$ would be violated. Formally, the logarithms of the entries form an additive rank-1 tensor:
\begin{equation}
    \log l^{(K)}_q = \sum_{k=1}^K \log l^{(1)}_{q_k}
\end{equation}
A permutation preserving this additive structure must act as the same base permutation $\sigma$ on each coordinate independently \citep{kolda2009tensor}.

Therefore, if $l_1^{(K)} = \pi(l_2^{(K)})$, then $\pi$ is induced by a base permutation $\sigma \in S_M$ acting as $\sigma(q_1, \dots, q_K) = (\sigma(q_1), \dots, \sigma(q_K))$, and $l_1 = \sigma(l_2)$.

\subsubsection{Step 2b: Factorization of the interaction matrix}

Similarly, $p^{(K)} = p^{(1)} \otimes \cdots \otimes p^{(1)}$ ($K$ times). The entries satisfy:
\begin{equation}
    p^{(K)}_{qh} = \prod_{k=1}^K p^{(1)}_{q_k h_k}
\end{equation}
Let $\mathrm{vec}(p^{(1)}) \in (0,1)^{M^2}$ denote the vectorization of $p^{(1)}$. Then:
\begin{equation}
    \mathrm{vec}(p^{(K)}) = \mathrm{vec}(p^{(1)}) \otimes \cdots \otimes \mathrm{vec}(p^{(1)})
\end{equation}
By the same rank-1 tensor uniqueness argument as in Step 2a, $\mathrm{vec}(p^{(1)})$ is recoverable up to permutation. However, this permutation is not arbitrary over $M^2$ entries --- it must be consistent with the base-level permutation $\sigma$ established in Step~2a.

The permutation $\pi$ acts simultaneously on both $l^{(K)}$ and $p^{(K)}$, because both are defined over the same block indices. Block $q$ has size $l^{(K)}_q$ and interaction values $p^{(K)}_{qh}$. Any permutation that preserves both must be factorizable as $\sigma$ acting independently on each level of the multi-index. The permutation $\sigma$ from Step~2a induces the action:
\begin{equation}
    p_{1,ij} = p_{2,\sigma(i)\sigma(j)}
\end{equation}
Therefore $p_1 = \sigma(p_2)$ under the same permutation $\sigma$. \qed

\subsection{Step 3: Combining the results}

From Step~1, $W_K^{(1)} = W_K^{(2)}$ a.e.\ implies that $(l_1^{(K)}, p_1^{(K)})$ and $(l_2^{(K)}, p_2^{(K)})$ agree up to a block permutation $\pi \in S_{M^K}$. From Step~2, the Kronecker structure forces $\pi$ to be induced by a base-level permutation $\sigma \in S_M$, yielding $l_1 = \sigma(l_2)$ and $p_1 = \sigma(p_2)$.

This establishes that the multifractal step graphon $W_K$ uniquely determines the base parameters $\theta^{(1)} = (l^{(1)}, p^{(1)})$ up to community label permutation.

\subsection{Discussion of Assumptions}

\textbf{Strict positivity of $l^{(1)}$.} The requirement $l^{(1)} \in \mathrm{int}(\Delta^{M-1})$ ensures that every community has positive measure and contributes to the graphon. If some $l^{(1)}_i = 0$, the corresponding community is absent and the effective model has fewer than $M$ communities, reducing to an identifiable model of smaller dimension.

\textbf{Strict positivity of $p^{(1)}$.} The requirement $p^{(1)}_{ij} \in (0,1)$ ensures that the vectorized interaction matrix has no zero entries, which is needed for the rank-1 tensor factorization to be unique. If $p^{(1)}_{ij} = 0$ for some $(i,j)$, multiple factorizations may produce the same zero entry. In practice, exact zeros are measure-zero in the parameter space and do not arise in the sampling procedure described in Appendix~\ref{app:training}.

\textbf{$K \geq 2$.} The result requires at least two Kronecker expansions. For $K = 1$, the graphon is a standard step graphon with $M$ blocks, and identifiability reduces to the classical step graphon case (covered by Step~1 alone).

Note this proposition does not claim identifiability over arbitrary graphon representatives under all measure-preserving transformations, nor does it imply statistical stability from finite noisy graph samples.
\vspace{2em}

% ==========================================
% Section C: architecture
% ==========================================
\section{Multiwavelet Neural Operator Transformer}
\label{app:architecture}

\subsection{Architecture}

Given an observed adjacency matrix $A \in \mathbb{R}^{N \times N}$, the model processes the graph topology through the following operations.

\textbf{Channel lift.}
A learned linear map lifts each entry of the input $A$ to $k^2$ channels, reshaping it into $x^{(0)} \in \mathbb{R}^{N \times N \times k^2}$. The Legendre polynomial basis enters through the fixed multiwavelet filters.

\textbf{Multiwavelet decomposition.}
A 2D discrete multiwavelet transform is applied over $L$ spatial scales using a non-standard wavelet representation \citep{gupta2021multiwavelet}. The multiwavelet basis is constructed via Gram--Schmidt orthogonalization of shifted Legendre polynomials, ensuring that high-frequency details are strictly orthogonal to the coarse approximation. At each scale $\ell \in \{1, \dots, L\}$, the feature grid is decomposed by a coarse filter $\mathcal{H}$ and a composite detail filter $\mathcal{G}$:
\begin{align}
    s^{(\ell)} &= x^{(\ell-1)} * \mathcal{H} \label{eq:wavelet_s} \\
    d^{(\ell)} &= x^{(\ell-1)} * \mathcal{G} \label{eq:wavelet_d}
\end{align}
The coarse coefficients $s^{(\ell)}$ capture macroscopic connectivity, while $d^{(\ell)}$ concatenates horizontal (LH), vertical (HL), and diagonal (HH) high-frequency variations. The output $x^{(\ell)} = s^{(\ell)}$ serves as input for the next coarser scale. This hierarchical decomposition mirrors the multiplicative construction of the multifractal step graphon. When $N$ is not a power of two, the canonicalised input (Appendix~\ref{app:ordering}) is cropped to the largest dyadic size $\leq N$, which drops the lowest-degree nodes.

\textbf{Multiscale tokenization.}
The wavelet coefficients (details $d^{(\ell)}$ for $\ell \le L$ and the final coarse $s^{(L)}$) are flattened into sequences and concatenated with positional grids and scale-indicator labels to form multiscale tokens $T^{(\ell)}$.

\textbf{Cross-attention for operator approximation.}
The model parameterizes the inverse operator via cross-attention in a branch-trunk architecture \citep{lu2021learning, shih2025transformers, hao2023gnot}. The multiscale tokens encode the input function (branch), while learnable query embeddings evaluate the target parameters (trunk).

Let $Q_l, Q_p \in \mathbb{R}^{M \times D}$ denote query matrices for $l^{(1)}$ and $p^{(1)}$. At each scale $\ell$, tokens $T^{(\ell)}$ are projected into Keys $K^{(\ell)} \in \mathbb{R}^{N_\ell \times D}$ and Values $V^{(\ell)} \in \mathbb{R}^{N_\ell \times S}$. The cross-attention aggregates these into fixed-size latent representations:
\begin{equation} \label{eq:cross_attention}
    z_s^{(\ell)}(m) = \sum_{d=1}^D \left( \frac{1}{N_\ell} \sum_{j=1}^{N_\ell} k_d \left(w_j^{(\ell)}\right)^T v_s \left(w_j^{(\ell)}\right) \right) q_d(m)
\end{equation}
where $m \in \{1, \dots, M\}$ indexes the target community and $w_j^{(\ell)}$ is the $j$-th wavelet token within $T^{(\ell)}$. The inner summation averages over $N_\ell$ grid locations, forming a data-dependent basis from the wavelet coefficients. The query embedding $q_d(m)$ extracts the relevant features, yielding attention outputs $Z_l^{(\ell)}, Z_p^{(\ell)} \in \mathbb{R}^{M \times S}$.

\textbf{Decoding.}
Scale-specific representations are concatenated into global embeddings $Z_p = [Z_p^{(1)}, \dots, Z_p^{(L)}]$ and $Z_l = [Z_l^{(1)}, \dots, Z_l^{(L)}]$. MLPs decode these to predict the base parameters:
\begin{align}
    \hat{p}^{(1)} &= \text{MLP}_p(Z_p) \in [0, 1]^{M \times M} \label{eq:mlp_p} \\
    \hat{l}^{(1)} &= \text{MLP}_l(Z_l) \in \Delta^{M-1} \label{eq:mlp_l}
\end{align}

\section{Empirical Validation of the Multiwavelet Basis}
\label{app:wavelet-validation}

We empirically verify that the multiwavelet basis used in MWNOT is responsive to the recursive multiscale construction of the multifractal step graphon to support the inductive bias of using this basis for multifractal objects.

\paragraph{Method.} For each recursion depth $K \in \{1, \ldots, 6\}$, we construct the continuous graphon $W_K$ on an $N \times N$ grid ($N = 1024$) using fixed base parameters $l^{(1)} = [0.4, 0.6]$ and $p^{(1)} = \begin{bmatrix} 0.9 & 0.1 \\ 0.1 & 0.9 \end{bmatrix}$ for demonstration. We project $W_K$ onto the constant component of the Legendre multiwavelet basis and recursively apply the 2D multiwavelet transform $L$ times. At each scale $\ell$, we record the squared-norm energy of the detail coefficients $E^{(\ell)} = \sum |d^{(\ell)}|^2$, and after the final iteration, we record the energy of the coarse approximation $E^{(L)}_{\text{coarse}} = \sum |s^{(L)}|^2$. From these we compute two summary quantities:
\begin{align}
\rho_{\text{fine}}(K) &= \frac{E^{(1)} + E^{(2)}}{E^{(L)}_{\text{coarse}} + \sum_{\ell=1}^{L} E^{(\ell)}}, \\
\rho_{\text{detail}}(K) &= \frac{\sum_{\ell=1}^{L} E^{(\ell)}}{E^{(L)}_{\text{coarse}} + \sum_{\ell=1}^{L} E^{(\ell)}}.
\end{align}

\paragraph{Result.} Figure~\ref{fig:wavelet-validation} reports both quantities as a function of $K$. Both $\rho_{\text{fine}}$ and $\rho_{\text{detail}}$ grow approximately exponentially with $K$ (linear on a semi-log scale), spanning roughly two orders of magnitude over the recursion depths used in training.

\paragraph{Interpretation.} Each recursion level introduces finer-scale block boundaries to $W_K$, which the multiwavelet basis encodes through its detail coefficients. Deeper recursion therefore produces an exponentially larger share of energy in the detail components. MWNOT's encoder and cross-attention operate on these detail coefficients; the architecture's representational capacity is thus allocated to the components where the multifractal structure of $W_K$ resides. We interpret this as an empirical match between the basis and the construction, supporting the multiwavelet basis as a principled choice for representing this object class.

\begin{figure}[h]
    \centering
    \includegraphics[width=0.85\linewidth]{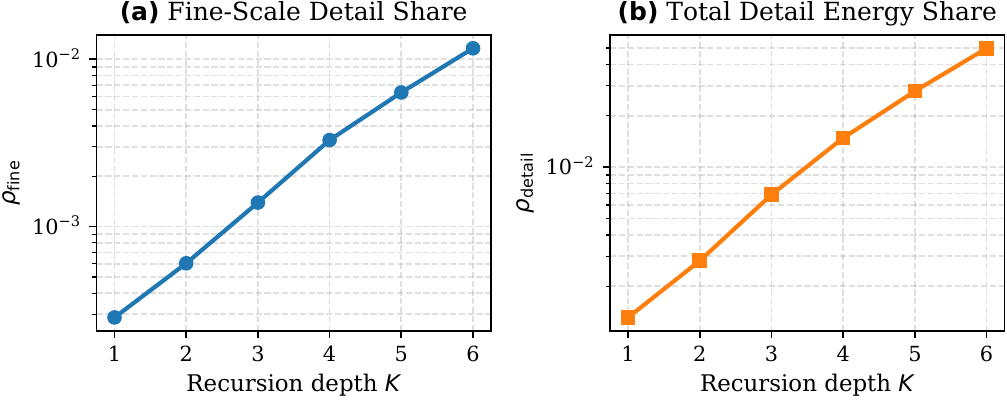}
    \caption{\textbf{Multiwavelet energy response to recursion depth.} (a) Fine-scale detail share $\rho_{\text{fine}}$. (b) Total detail energy share $\rho_{\text{detail}}$. Both quantities grow approximately exponentially with $K$, consistent with the multiplicative scaling structure of the multifractal step graphon.}
    \label{fig:wavelet-validation}
\end{figure}

\subsection{Input Ordering}
\label{app:ordering}
Function-space methods operate on inputs aligned to spatial axes. More specifically, the multiwavelet basis decomposes the adjacency matrix as a discretized function on $[0,1]^2$, which benefits from inputs whose block structure is aligned with the spatial axes. As such, function-space formulations trade architectural permutation equivariance for a parameterization that allows for generalization to unseen graph sizes at inference.

This consideration concerns only the finite-sample to graphon inverse map learned by the architecture. It is distinct from the population-level identifiability of $W_k \mapsto \theta^{(1)}$ established in Proposition~\ref{prop:identifiability}, which holds at the level of graphons, and from the exchangeability of finite graphs sampled from a graphon under the Aldous framework, which guarantees that \emph{generation} of finite samples under our model remains permutation-invariant.

\textbf{Canonicalization.} We resolve the ordering dependence with a single deterministic, parameter-free rule applied to every input, synthetic or empirical: nodes are sorted in descending order of degree (weighted strength for weighted inputs). Ties are broken by a stable sort in the zero-shot benchmark and arbitrarily elsewhere. The rule uses no metadata, is available for any graph, and makes the model input a function of the graph alone: two labelings of the same graph yield the same sorted adjacency matrix up to the order of equal-degree nodes, so the inferred $\hat\theta^{(1)}$ is invariant to node relabeling up to that residual tie ambiguity. Degree sorting also recovers an approximately block-aligned form for the wavelet basis, since expected degree is monotone in community membership under the model (Appendix~\ref{app:closed-form}).

\textbf{Training.} Synthetic graphs are produced by multinomial community assignment followed by geometric edge sampling (Appendix~\ref{app:training}). This randomizes which node sits in which community, producing adjacency matrices that resemble static noise rather than block-structured signals; the canonicalization above is applied to every generated training graph before augmentation (Appendix~\ref{app:training}).

\textbf{Inference.} The same canonicalization is applied to every empirical network before inference, in the zero-shot benchmark (Appendix~\ref{app:lggm_protocol}), the single-instance reconstructions (Appendix~\ref{app:single_instance}), and the EEG analysis. Dataset-native orderings, whether anatomical, temporal or spatial, are never used; all reported numbers are produced under the metadata-free ordering. The MMD statistics of the benchmark are permutation-invariant, so canonicalizing the model input leaves the reference side of every comparison unchanged.

% ==========================================
% Section D: Training Details
% ==========================================
\section{Training Details}
\label{app:training}

\subsection{Data Generation}

We generate synthetic training pairs $(A, \{p^{(1)}, l^{(1)}\})$ by constructing $K$-level parameters $l^{(K)}$ and $p^{(K)}$ via Kronecker products of randomly sampled base measures. The base parameters are sampled as follows. For the community distribution $l^{(1)} \in \Delta^{M-1}$, each entry is initialized to a minimum value of $0.1$ to ensure that every community has non-negligible measure, and the remaining probability mass $(1 - 0.1M)$ is distributed by drawing $M$ uniform random values and normalizing them to sum to the residual. The interaction matrix $p^{(1)} \in (0,1)^{M \times M}$ is sampled entry-wise from $\text{Uniform}(0.1, 1.0)$, ensuring strict positivity as required by the identifiability result (Proposition~1). Parameter sets with $|l^{(1)}_0 - l^{(1)}_1| \le 0.01$ are rejected. The targets are ordered by expected degree, so community 0 is the higher-degree community.

Nodes are assigned to communities from a multinomial distribution parameterized by $l^{(K)}$, and edge weights are drawn as $e_{ij} \sim \text{Geometric}(1 - p^{(K)}_{qh})$. Non-zero edge weights are log-binned to form the input adjacency matrix:
\begin{equation} \label{eq:log_binning}
    A_{ij} = \lfloor \log_2(e_{ij}) \rfloor + 1 \quad \text{for} \quad e_{ij} > 0
\end{equation}
Inputs are canonicalised as in Appendix~\ref{app:ordering} before log-binning.

\paragraph{Augmentation.} Each training graph is augmented with probability 0.5 by node dropout, edge dropout and spurious edges, each at a rate drawn from $\mathrm{Uniform}(0, 0.3)$. The edge masks are symmetric with zero diagonal. Augmented graphs are binarized and are not re-sorted after augmentation.

\subsection{Loss Function}

The model is trained with a composite loss. The interaction matrix is optimized via binary cross-entropy with logits ($\mathcal{L}_{\text{BCE}}$) on the raw MLP outputs $\hat{Z}_p$, while the node distribution $\hat{l}^{(1)} \in \Delta^{M-1}$ is optimized via KL divergence ($\mathcal{L}_{\text{KL}}$):
\begin{equation} \label{eq:loss}
    \mathcal{L}(\theta^{(1)}, \hat{\theta}^{(1)}) = \mathcal{L}_{\text{BCE}}(p^{(1)}, \hat{Z}_p) + \mathcal{L}_{\text{KL}}(l^{(1)} \parallel \hat{l}^{(1)})
\end{equation}

\subsection{Optimization and Hyperparameters}

The model is trained for 2000 epochs using AdamW with a base learning rate of $10^{-3}$, beginning with a 10-epoch linear warmup from $10^{-5}$ to $10^{-3}$, followed by cosine annealing to $10^{-5}$ over the remaining epochs. The batch size is 64 (32 per GPU on two GPUs).

Each epoch draws 2000 new graphs from the generator. Training graphs have $N \in \{128, 512, 1024\}$ and $K \in \{2, 4, 6\}$. Validation uses 100 fixed graphs at $K=4$ and $N=512$, and we keep the epoch with the lowest validation MAE of $p^{(1)}$ plus that of $l^{(1)}$ (epoch 1816). The MWNOT architecture uses the following hyperparameters: query/key dimension $D = 128$, latent feature/value dimension $S = 128$, $L = 4$ wavelet decomposition scales, and wavelet order $k = 6$ (corresponding to order-6 Legendre polynomial bases). The number of base communities is $M = 2$ throughout. Recovery error (Figure~\ref{fig:recovery}) is computed on 100 new, unaugmented graphs per $(N, K)$ (random seed 42 for the whole run), with the depth sweep at $N = 1024$, after matching predicted to true communities by the best of the $M!$ permutations.

\subsection{Compute resources.}
Training was performed on two NVIDIA V100 GPUs and completed in approximately
8 hours. Inference is fast enough to run on a single GPU or modern CPU: per-graph
inference time is reported in Table~\ref{tab:wmgm_compare} (``Time (ms)'' column)
as one forward pass on the cropped input, averaged over 10 repeats after one warm-up
pass, excluding normalisation, sorting and log-binning.

\section{Expressivity Sweep Details}
\label{app:expressivity_setup}
For both sweeps, we set $M = 2$, $K = 4$, and $N = 1024$, averaging over 10 realizations per setting. The heterogeneity sweep uses $\mathbf{l}^{(1)} = [0.25, 0.75]$ and interpolates $\mathbf{p}^{(1)}$ from a uniform matrix $[[0.5, 0.5], [0.5, 0.5]]$ to a heterogeneous matrix $[[0.99, 0.05], [0.05, 0.01]]$ as $w_{\mathrm{het}}$ varies from 0 to 1 over 30 steps. Panel (d) of Figure~\ref{fig:expressivity} is drawn from a separate sweep, in which $\mathbf{p}^{(1)}$ is interpolated in 12 steps towards each of three targets (left-skewed $[[0.99, 0.2], [0.2, 0.01]]$, symmetric $[[0.99, 0.01], [0.01, 0.99]]$ and right-skewed $[[0.01, 0.6], [0.6, 0.99]]$), with one realization per step and $q \in [-3.5, 3.5]$. The structure sweep uses $\mathbf{l}^{(1)} = [0.4, 0.6]$ and interpolates between a disassortative matrix $[[0.01, 0.99], [0.99, 0.01]]$ at $w_{\mathrm{struct}} = -1$, the uniform matrix at $w_{\mathrm{struct}} = 0$, and an assortative matrix $[[0.99, 0.01], [0.01, 0.99]]$ at $w_{\mathrm{struct}} = +1$ over 21 steps. In panels (b) and (c), degree assortativity is computed on the directed weighted graph, and Louvain modularity and the normalised-Laplacian spectral gap on the weighted undirected graph, with isolated nodes removed; $\mathbf{p}^{(1)}$ is clamped to $[0.01, 0.99]$. The multifractal spectra in these sweeps use $q$ grids of 51 points on $[-10, 10]$ or 40 points on $[-8, 8]$, instead of the grid of Appendix~\ref{app:metrics_definitions}. Panel (a)'s degree-distribution regimes use distinct hand-tuned $(\mathbf{l}^{(1)}, \mathbf{p}^{(1)}, K)$ configurations to target each canonical shape (full parameters in the released code).

The Lipschitz--Hölder exponent $\alpha$ measures local singularity strength: small $\alpha$ corresponds to hubs, large $\alpha$ to peripheral nodes. $f(\alpha)$ gives the fractal dimension of the set of nodes at each $\alpha$. The width $\Delta\alpha = \alpha_{\max} - \alpha_{\min}$ summarizes multiscale heterogeneity. Full definitions are in Appendix~\ref{app:metrics_definitions}.

\section{Zero-shot benchmark: evaluation protocol and full results}
\label{app:lggm_protocol}

This appendix states the protocol behind Table~\ref{tab:twocolumn_results} and reports the full table with seed-to-seed variability (Table~\ref{tab:lggm_full}). We follow the evaluation protocol of the LGGM benchmark \citep{wang2025large}, such as the test splits, metrics, and released evaluation code. The paragraphs below state how our model is applied within that protocol.

\paragraph{Data.} We use the twelve domains and the held-out test graphs of the LGGM benchmark \citep{wang2025large} processed exactly as the benchmark's evaluation pipeline (undirected, self-loops removed). For ECO, as in the released evaluation code, the reference combines the training, validation and test splits (251 graphs, six unique).

\paragraph{Inference.} The inverse operator receives the canonicalised (Appendix~\ref{app:ordering}) binary adjacency matrix, truncated to the largest dyadic size $2^{\lfloor \log_2 N \rfloor}$ (graphs with fewer than $2^{L}=16$ nodes are zero-padded to 16 nodes) as required by the multiwavelet decomposition. One forward pass returns $\hat\theta^{(1)}=(\hat l^{(1)},\hat p^{(1)})$. No per-graph optimisation is performed and $\hat\theta^{(1)}$ is used as returned.

\paragraph{Recursion depth.} The depth $K$ used to realise a graph from $\hat\theta^{(1)}$ is set per reference graph by the closed-form density relation of Appendix~\ref{app:closed-form}, $\rho_{\mathrm{emp}}=\rho(W_1)^{K}$ with $\rho(W_1)=\sum_{ij}\hat l_i \hat l_j \hat p_{ij}$, solved as $K^\star=\mathrm{round}\big(\log\rho_{\mathrm{emp}}/\log\rho(W_1)\big)$ and clamped to $[2,6]$. Here $\rho_{\mathrm{emp}}$ is the number of non-zero entries divided by $N^2$, diagonal included, on the full uncropped network; rounding is half-to-even, and $K=4$ is used if $\rho_{\mathrm{emp}}$ or $\rho(W_1)$ lies outside $(0,1)$. $K^\star$ depends only on $\hat\theta^{(1)}$ and the observed edge density. It is fixed before any graph is generated. Therefore, no evaluation metric is minimised over $K$.

\paragraph{Generation and metrics.} For each reference graph and each seed, one graph of the same size is sampled from $\hat\theta^{(1)}$ at depth $K^\star$. These graphs are directed and may contain self-loops. Following the benchmark's protocol, they are binarized, symmetrised by union ($A+A^\top>0$) and stripped of self-loops before calculating the reconstruction metrics. We repeat the sampling for ten seeds (42--51) and report mean and standard deviation. DiGress and LGGM-X entries are the values published with the benchmark. Ours-Mono is the ablation of Ours, trained with $p^{(1)}$ fixed at $0.5$ and evaluated under the identical protocol.

\begin{table}[p]
\caption{Full version of Table~\ref{tab:twocolumn_results} with seed-to-seed standard deviations. Lower is better.}
\label{tab:lggm_full}
\centering
\footnotesize
\setlength{\tabcolsep}{4pt}
\renewcommand{\arraystretch}{0.95}
\begin{tabular}{llcccc}
\toprule
\textbf{Domain} & \textbf{Method} & \textbf{DEG} & \textbf{CC} & \textbf{Spec} & \textbf{Orb} \\
\midrule
FB & DiGress & 0.3376 & 0.6298 & 0.0797 & 0.3593 \\
 & LGGM-X & 0.4723 & 0.6843 & 0.2924 & 0.7555 \\
 & Ours-Mono & 0.2782 $\pm$ 0.0017 & 0.3045 $\pm$ 0.0166 & 0.0308 $\pm$ 0.0004 & 0.0412 $\pm$ 0.0008 \\
 & \textbf{Ours} & \textbf{0.0589 $\pm$ 0.0027} & \textbf{0.0500 $\pm$ 0.0013} & \textbf{0.0284 $\pm$ 0.0008} & 0.0403 $\pm$ 0.0007 \\
\midrule
ASN & DiGress & 0.1496 & 0.3258 & 0.1506 & 0.4420 \\
 & LGGM-X & \textbf{0.0281} & 0.2440 & \textbf{0.0830} & \textbf{0.0618} \\
 & Ours-Mono & 0.1350 $\pm$ 0.0070 & 0.1375 $\pm$ 0.0053 & 0.1484 $\pm$ 0.0056 & 0.5197 $\pm$ 0.0411 \\
 & \textbf{Ours} & 0.0780 $\pm$ 0.0081 & 0.1281 $\pm$ 0.0096 & 0.1003 $\pm$ 0.0049 & 0.3012 $\pm$ 0.0254 \\
\midrule
EMAIL & DiGress & 0.2192 & 0.6012 & 0.0702 & 0.3416 \\
 & LGGM-X & 0.0751 & 0.2364 & 0.0768 & 0.3089 \\
 & Ours-Mono & 0.1396 $\pm$ 0.0017 & 0.0921 $\pm$ 0.0041 & 0.0600 $\pm$ 0.0011 & 0.0765 $\pm$ 0.0023 \\
 & \textbf{Ours} & \textbf{0.0631 $\pm$ 0.0037} & \textbf{0.0529 $\pm$ 0.0005} & \textbf{0.0563 $\pm$ 0.0021} & \textbf{0.0725 $\pm$ 0.0002} \\
\midrule
WEB & DiGress & 0.2556 & 0.6186 & 0.1877 & 0.6045 \\
 & LGGM-X & \textbf{0.0648} & 0.3961 & \textbf{0.0549} & \textbf{0.1127} \\
 & Ours-Mono & 0.1553 $\pm$ 0.0033 & 0.1456 $\pm$ 0.0038 & 0.1749 $\pm$ 0.0009 & 0.3319 $\pm$ 0.0096 \\
 & \textbf{Ours} & 0.0692 $\pm$ 0.0024 & \textbf{0.0795 $\pm$ 0.0021} & 0.1145 $\pm$ 0.0023 & 0.2760 $\pm$ 0.0076 \\
\midrule
ROAD & DiGress & 0.3705 & 0.8226 & 0.2801 & 0.7198 \\
 & LGGM-X & \textbf{0.0713} & 0.2193 & \textbf{0.0987} & \textbf{0.2986} \\
 & Ours-Mono & 0.4300 $\pm$ 0.0025 & 0.6884 $\pm$ 0.0091 & 0.3142 $\pm$ 0.0016 & 0.7433 $\pm$ 0.0076 \\
 & \textbf{Ours} & 0.1107 $\pm$ 0.0039 & \textbf{0.1850 $\pm$ 0.0140} & 0.1446 $\pm$ 0.0047 & 0.6027 $\pm$ 0.0194 \\
\midrule
POWER & DiGress & 0.3726 & 0.4582 & 0.3270 & 1.4732 \\
 & LGGM-X & \textbf{0.0119} & \textbf{0.1293} & \textbf{0.0373} & \textbf{0.0754} \\
 & Ours-Mono & 0.1963 $\pm$ 0.0037 & 0.3947 $\pm$ 0.0184 & 0.2809 $\pm$ 0.0036 & 0.9536 $\pm$ 0.0265 \\
 & \textbf{Ours} & 0.1660 $\pm$ 0.0063 & 0.2992 $\pm$ 0.0142 & 0.2265 $\pm$ 0.0066 & 0.6956 $\pm$ 0.0293 \\
\midrule
BIO & DiGress & 0.2712 & 0.5202 & 0.1127 & 0.3188 \\
 & LGGM-X & 0.1081 & 0.2696 & 0.0900 & 0.2053 \\
 & Ours-Mono & 0.0879 $\pm$ 0.0013 & 0.1467 $\pm$ 0.0078 & 0.0827 $\pm$ 0.0012 & 0.1584 $\pm$ 0.0077 \\
 & \textbf{Ours} & \textbf{0.0442 $\pm$ 0.0016} & \textbf{0.0605 $\pm$ 0.0033} & \textbf{0.0629 $\pm$ 0.0021} & \textbf{0.1090 $\pm$ 0.0053} \\
\midrule
ECON & DiGress & 0.2987 & 0.4841 & 0.2162 & 0.3834 \\
 & LGGM-X & 0.1213 & 0.0920 & 0.1120 & 0.1086 \\
 & Ours-Mono & 0.1011 $\pm$ 0.0011 & 0.1092 $\pm$ 0.0066 & 0.0707 $\pm$ 0.0010 & 0.0916 $\pm$ 0.0028 \\
 & \textbf{Ours} & \textbf{0.0780 $\pm$ 0.0018} & \textbf{0.0474 $\pm$ 0.0030} & \textbf{0.0199 $\pm$ 0.0007} & \textbf{0.0598 $\pm$ 0.0026} \\
\midrule
RT & DiGress & 0.4164 & \textbf{0.1327} & 0.4147 & 0.5957 \\
 & LGGM-X & \textbf{0.0525} & 0.1429 & \textbf{0.1330} & 0.2219 \\
 & Ours-Mono & 0.2911 $\pm$ 0.0040 & 0.4537 $\pm$ 0.0385 & 0.3705 $\pm$ 0.0039 & 0.3612 $\pm$ 0.0247 \\
 & \textbf{Ours} & 0.1999 $\pm$ 0.0081 & 0.4070 $\pm$ 0.0341 & 0.2345 $\pm$ 0.0120 & 0.2452 $\pm$ 0.0280 \\
\midrule
COL & DiGress & 0.2473 & 0.5826 & 0.2314 & 0.7679 \\
 & LGGM-X & \textbf{0.0736} & 0.5769 & \textbf{0.0895} & \textbf{0.0988} \\
 & Ours-Mono & 0.2143 $\pm$ 0.0035 & 0.3022 $\pm$ 0.0100 & 0.2329 $\pm$ 0.0020 & 0.7161 $\pm$ 0.0080 \\
 & \textbf{Ours} & 0.1020 $\pm$ 0.0021 & \textbf{0.1683 $\pm$ 0.0062} & 0.1863 $\pm$ 0.0034 & 0.6032 $\pm$ 0.0156 \\
\midrule
ECO & DiGress & 0.5431 & 0.7915 & 0.2338 & 0.6045 \\
 & LGGM-X & 0.4753 & 0.3904 & 0.3194 & 0.3934 \\
 & Ours-Mono & 0.1744 $\pm$ 0.0010 & 0.3025 $\pm$ 0.0038 & 0.0436 $\pm$ 0.0005 & 0.2044 $\pm$ 0.0001 \\
 & \textbf{Ours} & \textbf{0.0632 $\pm$ 0.0026} & \textbf{0.2298 $\pm$ 0.0003} & \textbf{0.0326 $\pm$ 0.0004} & 0.2040 $\pm$ 0.0000 \\
\midrule
CITATION & DiGress & 0.2527 & 0.7790 & 0.1315 & 0.4966 \\
 & LGGM-X & 0.1348 & 0.7257 & 0.1160 & 0.4981 \\
 & Ours-Mono & 0.1364 $\pm$ 0.0013 & 0.2654 $\pm$ 0.0093 & 0.0917 $\pm$ 0.0007 & 0.1601 $\pm$ 0.0052 \\
 & \textbf{Ours} & \textbf{0.0407 $\pm$ 0.0013} & \textbf{0.1150 $\pm$ 0.0019} & \textbf{0.0621 $\pm$ 0.0017} & \textbf{0.1010 $\pm$ 0.0026} \\
\midrule
ALL & DiGress & 0.3112 & 0.5622 & 0.2030 & 0.5923 \\
 & LGGM-X & 0.1408 & 0.3422 & 0.1253 & \textbf{0.2616} \\
 & Ours-Mono & 0.1950 $\pm$ 0.0027 & 0.2785 $\pm$ 0.0111 & 0.1584 $\pm$ 0.0019 & 0.3632 $\pm$ 0.0114 \\
 & \textbf{Ours} & \textbf{0.0895 $\pm$ 0.0037} & \textbf{0.1519 $\pm$ 0.0075} & \textbf{0.1057 $\pm$ 0.0035} & 0.2759 $\pm$ 0.0114 \\
\bottomrule
\end{tabular}
\par\smallskip
\begin{minipage}{\linewidth}\footnotesize\raggedright
DEG, CC, Spec, Orb: MMD of degree, clustering coefficient, Laplacian eigenvalues and orbit counts. DiGress and LGGM-X are published values. Ours and Ours-Mono are mean $\pm$ SD over 10 seeds; the SD of the ALL row is the mean of the twelve per-domain SDs. Bold marks the best method where the mean $\pm$ SD intervals do not overlap. On ECO the orbit-count kernel saturates, so the orbit MMD is a constant set by the reference graphs for any distinct, dense generated graphs; it is not used to mark the best method.
\end{minipage}
\end{table}

\section{Single-instance reconstruction: protocol}
\label{app:single_instance}

This appendix states the protocol behind Table~\ref{tab:wmgm_compare}. Its metrics are defined in Appendix~\ref{app:metrics_definitions}. Each dataset is a single observed network. The comparison asks the reconstruction fidelity of networks generated from the zero-shot estimate $\hat\theta^{(1)}$, with a per-instance expectation--maximisation fit (WMGM) as a reference for what direct optimisation on the same network achieves.

\paragraph{Data.} We use six networks, each observed once. Four are directed (C.~elegans, Drosophila medulla, Allen mouse and P2P-Gnutella). Two are undirected (Power Grid and Minnesota Roads). Only C.~elegans and Allen mouse carry edge weights. The C.~elegans file is a $383 \times 575$ connection table whose rows are the 383 neurons. We keep only the neuron-to-neuron connections, the leading $383 \times 383$ block. The Allen mouse, Drosophila and C.~elegans networks contain self-loops (163, 105 and 8), which are kept. They enter the edge density, the node strengths used by the multifractal MSE and the degrees, where each self-loop counts once. Before use, we divide every weight by the smallest non-zero weight in that network, so the weakest edge has weight 1. We then order the nodes as described in Appendix~\ref{app:ordering}. This full-size network is the reference that every generated network is compared against. Both methods receive this network after log-binning (Eq.~\eqref{eq:log_binning}). WMGM is fitted on the full network. Our operator receives the crop to the largest dyadic size for compatibility with multiwavelet transform, as described in Appendix~\ref{app:ordering}.

\paragraph{Recursion depth.} As in Appendix~\ref{app:lggm_protocol}, $K^\star$ is set by the closed-form density relation from $\hat\theta^{(1)}$ and the observed edge density, and is fixed before generation. For WMGM, which is fitted separately at each depth $K\in\{2,\dots,6\}$, the reported depth is the one at which the fitted parameters are self-consistent with the same relation, i.e.\ the $K$ minimising $|K^\star(\hat\theta^{(1)}_K)-K|$. No evaluation metric enters the choice for either method.

\paragraph{Generation and evaluation.} For each of ten seeds (42--51), one weighted network of the original size is sampled from each method's parameters at its depth. Samples are directed and may contain self-loops. The degree and spectral distances are computed on the binarized sample with self-loops kept. The sample is symmetrised by union for the spectral distance and, for the undirected references (Power Grid and Minnesota Roads), also for the degree distance. The multifractal MSE uses the out-strengths of the directed weighted sample. Each sample is compared with the reference network using multifractal MSE, normalised degree Wasserstein distance, and spectral Wasserstein distance of the normalised Laplacian. We report mean and standard deviation over seeds.

\paragraph{WMGM fitting.} WMGM is run on the same canonicalised, log-binned matrix at full size with $M=2$, up to 1000 EM iterations and an Adam step size of $10^{-3}$, once per depth and seed. Fitting stops early when the change in the evidence lower bound falls below $10^{-4}$, and the step size is halved when the objective plateaus (patience 50, cooldown 20, minimum $10^{-6}$). The parameters are initialized as follows: $l^{(1)}$ as uniform, $p^{(1)}$ from $\mathrm{Uniform}(0,1)$, and the community responsibilities from $\mathrm{Dirichlet}(1)$. $p^{(1)}$ is clamped to $[10^{-6}, 1-10^{-6}]$. Ties in the depth rule are resolved to the smallest $K$. Its reported time is the mean wall-clock time per fit over the five depths and ten seeds, measured on the same GPU as our inference.

\begin{table}[t]
\centering
\caption{Full version of Table~\ref{tab:wmgm_compare} with seed-to-seed standard deviations.}
\label{tab:wmgm_full}
\small
\setlength{\tabcolsep}{4pt}
\begin{tabular}{llcccc}
\toprule
Domain & Method & Time (ms) & MFrac & DegW & SpecW \\
\midrule
C. elegans & WMGM & 2060 $\pm$ 436 & 2.68e-2 $\pm$ 2.17e-2 & \textbf{5.67e-4 $\pm$ 2.55e-4} & \textbf{6.97e-2 $\pm$ 3.47e-3} \\
 & Ours & \textbf{3.76 $\pm$ 0.97} & \textbf{2.19e-4 $\pm$ 4.02e-5} & 1.20e-3 $\pm$ 2.08e-5 & 1.27e-1 $\pm$ 6.92e-4 \\
\midrule
Drosophila & WMGM & 1930 $\pm$ 77 & 7.35e-3 $\pm$ 8.56e-3 & 2.29e-4 $\pm$ 1.18e-4 & \textbf{6.41e-2 $\pm$ 1.62e-2} \\
 & Ours & \textbf{14.1 $\pm$ 0.03} & 3.84e-3 $\pm$ 3.68e-4 & 2.69e-4 $\pm$ 6.71e-6 & 1.18e-1 $\pm$ 1.42e-3 \\
\midrule
Allen Mouse & WMGM & 1050 $\pm$ 243 & \textbf{1.52e-4 $\pm$ 1.51e-5} & 1.38e-4 $\pm$ 3.76e-5 & 3.99e-3 $\pm$ 9.20e-3 \\
 & Ours & \textbf{2.95 $\pm$ 0.04} & 7.86e-3 $\pm$ 2.00e-3 & 9.86e-5 $\pm$ 8.96e-6 & 6.82e-3 $\pm$ 8.37e-4 \\
\midrule
P2P Gnutella & WMGM & 2540 $\pm$ 131 & 5.19e-3 $\pm$ 3.07e-3 & 1.41e-4 $\pm$ 2.78e-5 & \textbf{8.09e-2 $\pm$ 2.23e-2} \\
 & Ours & \textbf{198 $\pm$ 0.22} & 3.56e-3 $\pm$ 3.43e-4 & 1.23e-4 $\pm$ 8.12e-7 & 1.24e-1 $\pm$ 9.60e-4 \\
\midrule
Power Grid & WMGM & 1810 $\pm$ 201 & 5.88e-4 $\pm$ 9.86e-4 & 4.23e-5 $\pm$ 1.45e-5 & \textbf{1.81e-1 $\pm$ 6.52e-2} \\
 & Ours & \textbf{198 $\pm$ 0.05} & 1.24e-3 $\pm$ 8.78e-5 & \textbf{2.54e-5 $\pm$ 4.88e-7} & 2.58e-1 $\pm$ 2.25e-3 \\
\midrule
MN Roads & WMGM & 1490 $\pm$ 280 & \textbf{1.73e-3 $\pm$ 2.64e-4} & \textbf{5.92e-5 $\pm$ 6.16e-6} & 1.86e-1 $\pm$ 4.97e-2 \\
 & Ours & \textbf{50.9 $\pm$ 1.59} & 2.42e-3 $\pm$ 2.00e-5 & 1.24e-4 $\pm$ 1.91e-6 & 1.84e-1 $\pm$ 5.11e-3 \\
\bottomrule
\end{tabular}
\par\smallskip
\begin{minipage}{\linewidth}\footnotesize\raggedright
MFrac: multifractal MSE; DegW: degree Wasserstein; SpecW: spectral Wasserstein. Values are mean $\pm$ SD over 10 seeds. Bold marks the lower value where the two methods' mean $\pm$ SD intervals do not overlap.
\end{minipage}
\end{table}

\section{Definitions of Evaluation Metrics for Single Observation Datasets}
\label{app:metrics_definitions}

\subsection{Multifractal Spectrum ($f(\alpha)$) Estimation}
To estimate the continuous multifractal spectrum of the networks, we apply the Chhabra-Jensen method. Let $A$ be the adjacency matrix of a network with $N$ active nodes, and let $d_i = \sum_j A_{ij}$ be the degree of node $i$. We first define the degree probability distribution $P_i = d_i / \sum d_i$.

To isolate different scaling regimes, we define a one-parameter family of normalized measures, $\mu_i(q)$, parameterized by the moment $q \in \mathbb{R}$:
\begin{equation} \label{eq:mu_q}
    \mu_i(q) = \frac{P_i^q}{\sum_{j=1}^N P_j^q}
\end{equation}
By varying $q$, we heavily weight the massive hubs ($q > 0$) or the low-degree periphery ($q < 0$). The Lipschitz-H\"older exponent $\alpha(q)$ and its corresponding fractal dimension $f(\alpha(q))$ are then computed parametrically as:
\begin{equation} \label{eq:alpha_q}
    \alpha(q) = -\frac{1}{\log N} \sum_{i=1}^N \mu_i(q) \log P_i
\end{equation}
\begin{equation} \label{eq:f_alpha}
    f(\alpha(q)) = -\frac{1}{\log N} \sum_{i=1}^N \mu_i(q) \log \mu_i(q)
\end{equation}
The \textbf{Multifractal MSE} is calculated as the mean squared error between the empirical and synthetic continuous curves, $f_{emp}(\alpha)$ and $f_{syn}(\alpha)$, interpolated over a shared $\alpha$ support space. We use 30 values of $q$ evenly spaced in $[-4, 4]$ and interpolate both curves linearly on 50 points spanning the overlap of their $\alpha$ ranges.

\subsection{Degree Wasserstein Distance}
To evaluate the absolute local hub structure, we utilize the 1-Wasserstein distance (Earth Mover's Distance) between the sets of empirical and synthetic degrees. Let $U(x)$ and $V(x)$ be the cumulative distribution functions of the empirical degrees $d^{emp}$ and synthetic degrees $d^{syn}$, respectively. The unscaled degree distance is defined as:
\begin{equation} \label{eq:deg_wasserstein}
    W_1(d^{emp}, d^{syn}) = \int_{-\infty}^{\infty} |U(x) - V(x)| \, dx
\end{equation}
For cross-dataset comparisons independent of absolute scale (as presented in the main text), we compute the Normalized Degree Wasserstein by transforming the degrees into probability mass functions: $\tilde{d}_i = d_i / \sum d_j$ prior to calculating the Wasserstein integral.

\subsection{Spectral Wasserstein Distance}
To evaluate the global community structure, bottlenecking, and information flow of the networks, we analyze the eigenspectrum of the Normalized Laplacian matrix. The Normalized Laplacian is defined as:
\begin{equation} \label{eq:norm_laplacian}
    \mathcal{L} = I - D^{-1/2} A D^{-1/2}
\end{equation}
where $I$ is the identity matrix and $D$ is the diagonal degree matrix. Because the eigenvalues of $\mathcal{L}$ are strictly bounded within the interval $[0, 2]$ regardless of network size or absolute density, they provide a perfectly scaled invariant measure of global topology.

Let $\lambda^{emp}$ and $\lambda^{syn}$ denote the sorted sets of eigenvalues derived from the empirical and synthetic Normalized Laplacians. The Spectral Wasserstein distance is calculated as the 1-Wasserstein distance between these two discrete distributions:
\begin{equation} \label{eq:spec_wasserstein}
    W_1(\lambda^{emp}, \lambda^{syn}) = \inf_{\gamma \in \Gamma(\lambda^{emp}, \lambda^{syn})} \int \int |x - y| \, d\gamma(x, y)
\end{equation}
where $\Gamma$ denotes the collection of all joint probability measures with marginals equivalent to the empirical and synthetic eigenvalue distributions.

The degree and spectral distances are computed on binarized graphs. For directed networks, the degree distance uses out-degrees and the spectral distance uses the union-symmetrised graph. Standard deviations over seeds (Tables~\ref{tab:lggm_full} and~\ref{tab:wmgm_full}) are population SDs; Table~\ref{tab:eeg_alpha_groups} reports sample SDs across subjects.

\section{EEG Analysis}
\label{app:eeg}

\paragraph{Data.} We use the propofol sedation dataset of \citet{chennu2016brain}, which includes 20 healthy volunteers who were recorded at rest with 91-channel scalp EEG (250\,Hz) across four sessions: before propofol administration (baseline), under mild sedation, under moderate sedation, and after recovery.

\paragraph{Network construction.} Each epoch is band-pass filtered to the alpha band (8--15\,Hz, fourth-order Butterworth, zero-phase) and Hilbert-transformed. Functional connectivity between every pair of channels is the weighted phase-lag index (wPLI) computed per epoch and averaged over the epochs of the session \citep{vinck2011improved}. Edges are retained by a surrogate test. For each epoch we generate 1000 phase-randomised surrogates of every channel (Fourier amplitudes preserved, phases drawn uniformly), compute their wPLI matrices, and average them over epochs. An edge is kept, with its wPLI value as weight, if the observed session-mean wPLI exceeds the 95th percentile of the surrogate distribution for that channel pair.

\begin{figure}[t]
    \centering
    \begin{minipage}[b]{0.40\linewidth}
        \centering
        \includegraphics[width=\linewidth]{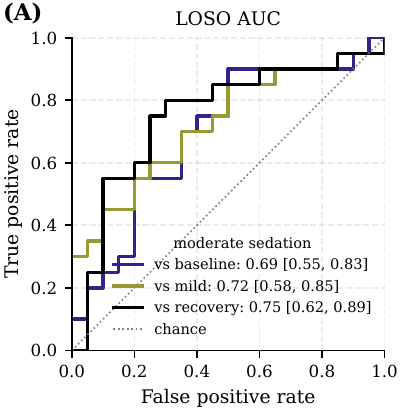}
    \end{minipage}
    \hfill
    \begin{minipage}[b]{0.52\linewidth}
        \centering
        \includegraphics[width=\linewidth]{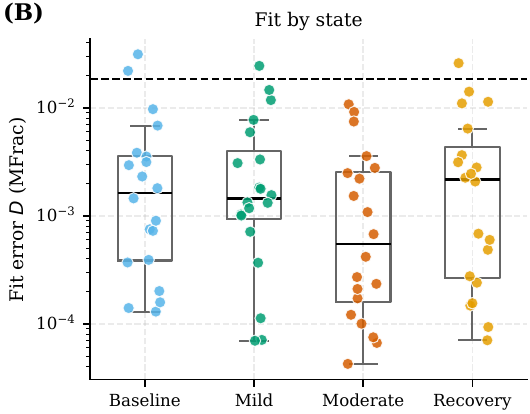}
    \end{minipage}
    \\[2mm]
    \includegraphics[width=\linewidth]{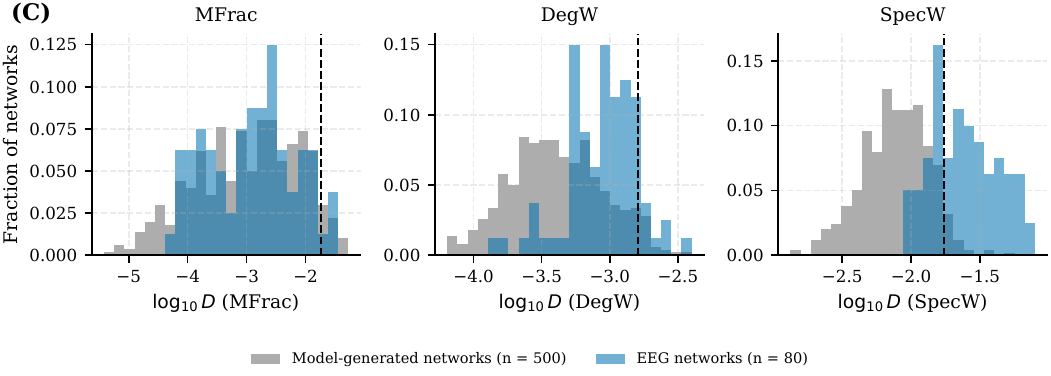}
    \caption{\textbf{Supplementary EEG analyses.}
    \textbf{(A)} Predictive validity. ROC curves of the leave-one-subject-out (LOSO) logistic regression. The legend gives the AUC with its 95\% bootstrap
    interval over subjects.  Dotted line represents chance.
    \textbf{(B)} Fit by state. Reconstruction error $D$ (multifractal MSE, mean over ten generated networks) of every
    EEG network, grouped by sedation state ($n=20$ per state). Boxes represent median and interquartile range. Whiskers represent $1.5\times$IQR. Dashed line represents the pass threshold of (C) $D = 0.0185$.
    (Kruskal--Wallis $p = 0.40$; Friedman over subjects $p = 0.42$; two-sided Mann--Whitney tests of moderate sedation
    against each other state, Holm-corrected, $p = 0.65$ for all three; baseline vs recovery $p = 0.86$).
    \textbf{(C)} Synthetic reference range. Distribution of $\log_{10} D$ for 500 model-generated networks whose
    density lies in the EEG range (gray) and for the 80 EEG networks (blue), for the multifractal MSE (MFrac; the
    decision metric), the normalised degree Wasserstein distance (DegW) and the spectral Wasserstein distance (SpecW).
    Dashed lines represent the 95th percentile of the model-generated networks ($0.0185$, $0.0016$ and $0.017$). The number of EEG networks
    below the line pass are as follows: 76 of 80 on MFrac, 71 of 80 on DegW, and 27 of 80 on SpecW.}
    \label{fig:eeg_supp}
\end{figure}

\paragraph{Inference.} Each network is processed as in Appendix~\ref{app:single_instance}. Weights are normalised so that the weakest edge has weight 1, nodes are canonicalised (Appendix~\ref{app:ordering}), the matrix is cropped to 64 nodes, which removes the 27 lowest-strength channels, and log-binned. One forward pass of the operator returns $\hat\theta^{(1)}$. From $\hat\theta^{(1)}=(\hat l^{(1)},\hat p^{(1)})$ we report two scalar summaries. One is the assortativity index $\hat p_{00}+\hat p_{11}-\hat p_{01}-\hat p_{10}$, which is positive when within-class connections dominate. The other is multifractal width $\Delta\alpha=\log_2(\hat p_{\max}/\hat p_{\min})$, which is the width of the spectrum of Appendix~\ref{app:closed-form} for $M=2$. Classical metrics (mean clustering, mean shortest path, greedy modularity, spectral gap of the normalised Laplacian) are computed on the same networks before cropping and after binarisation ($A_{ij}>0$). The shortest path and spectral gap use the largest connected component.

\paragraph{Group comparison.} The four sessions are compared as groups of 20 networks. For each measure we report the following: 1) mean and SD per state 2) Cohen's $d$ with pooled SD for each state minus moderate sedation and for baseline minus recovery 3) a two-sided Mann--Whitney $U$ test per contrast with Holm correction over the four contrasts.

\paragraph{Predictive validity.} To test whether the inferred rule summaries generalize to new individuals, we train a logistic regression on the two $\hat\theta^{(1)}$-derived summaries to distinguish moderate sedation from each other state, holding out one subject at a time. Features are standardised within each training fold, and the model uses the default L2 penalty ($C=1$). Predictions are pooled across held-out subjects and scored by AUC with 95\% bootstrap intervals over the 20 subjects (1000 resamples). The AUC is $0.69$ [$0.55$, $0.83$] for moderate sedation versus baseline, $0.72$ [$0.58$, $0.85$] versus mild sedation and $0.75$ [$0.62$, $0.89$] versus recovery. All three intervals exclude chance ($0.5$).

\paragraph{Synthetic reference range.} We ask whether the reconstruction error of each EEG network is within the range seen when the model is true. We draw parameter sets from the training prior and a depth $K$ uniformly from $\{2,\dots,6\}$, and generate one 91-node network from each. Unlike the EEG networks, which are symmetric, loop-free and wPLI-weighted, these networks are directed, may contain self-loops and have integer weights. As in Table~\ref{tab:wmgm_compare}, the degree and spectral distances are computed on binarized, union-symmetrised graphs with self-loops kept, and the multifractal MSE on the directed weighted samples. We keep a network only if its edge density lies in the observed EEG range, until 500 networks are kept. Each network, synthetic or EEG, is then processed as described above, given to MWNOT as inputs, and ten networks are generated from each inferred rule (seeds 42 to 51). The generated graphs are each compared with the input network using the multifractal MSE of Appendix~\ref{app:metrics_definitions}. The mean over the ten comparisons is the error $D$. The pass threshold is the 95th percentile of $D$ over the 500 synthetic networks, $0.0185$. The model's fit on a given EEG is considered acceptable if its $D$ is below the threshold.

Of the 80 EEG networks, 76 pass (18 baseline, 19 mild, 20 moderate and 19 recovery). The median $D$ of the EEG networks, $0.00139$, is close to the synthetic median, $0.00130$. The degree Wasserstein and spectral Wasserstein distances are reported for completeness but do not enter the decision. Under the same rule, 71 networks pass on the degree distance and 27 on the spectral distance. The spectral gap between real and synthetic networks is consistent with Table~\ref{tab:wmgm_compare}, where spectral error is also the weakest term.

\paragraph{Fit and sedation state.} State differences in the inferred rule could reflect worse fit under moderate sedation rather than a real change. To rule this out, we compared fit across states. We use $D$ as the fit score. Its median is $0.0016$ at baseline, $0.0014$ under mild sedation, $0.0005$ under moderate sedation and $0.0022$ at recovery. A Kruskal--Wallis test across the four states gives $p = 0.40$. A Friedman test over the 20 subjects gives $p = 0.42$. Two-sided Mann--Whitney tests of moderate sedation against each other state, Holm-corrected over the four contrasts of Table~\ref{tab:eeg_alpha_groups}, give $p = 0.65$ for all three. Baseline versus recovery gives $p = 0.86$. We found no evidence that fit differs across states.

\section{Compression of a periodic multirule generator}
\label{app:compression}

\begin{proposition}[Compression of a periodic multirule generator]
\label{prop:cycle_compression}
For each scale \(k\), let
\[
l^{[k]} \in \mathrm{int}(\Delta^{M_k-1}),
\qquad
p^{[k]} \in (0,1)^{M_k\times M_k},
\]
and define the multirule recursive expansion
\[
l^{(K)} = \bigotimes_{k=1}^K l^{[k]},
\qquad
p^{(K)} = \bigotimes_{k=1}^K p^{[k]}.
\]
Assume that the rule sequence is periodic with period \(R\), so that
\[
(l^{[k+R]},p^{[k+R]})=(l^{[k]},p^{[k]})
\qquad\text{for all }k,
\]
and let \(K=mR\). Define
\[
\tilde l^{(1)} := \bigotimes_{r=1}^R l^{[r]},
\qquad
\tilde p^{(1)} := \bigotimes_{r=1}^R p^{[r]},
\qquad
\tilde M := \prod_{r=1}^R M_r.
\]
Then
\[
\tilde l^{(1)} \in \mathrm{int}(\Delta^{\tilde M-1}),
\qquad
\tilde p^{(1)} \in (0,1)^{\tilde M\times \tilde M},
\]
and the original multirule generator is exactly representable as
\[
l^{(K)} = \underbrace{\tilde l^{(1)}\otimes \cdots \otimes \tilde l^{(1)}}_{m\text{ times}},
\qquad
p^{(K)} = \underbrace{\tilde p^{(1)}\otimes \cdots \otimes \tilde p^{(1)}}_{m\text{ times}}.
\]
Hence a periodic multirule recursive generator reduces exactly to a
single-basis generator with effective branching number
\[
\tilde M = \prod_{r=1}^R M_r.
\]
\end{proposition}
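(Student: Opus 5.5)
The argument rests on two elementary facts about the Kronecker product: it is associative, and it preserves positivity and normalization. First, $\tilde l^{(1)}$ and $\tilde p^{(1)}$ lie in the claimed sets. Indexing entries of $\tilde l^{(1)}$ by multi-indices $(i_1,\dots,i_R)$ with $i_r\in\{1,\dots,M_r\}$, each entry is $\prod_{r=1}^R l^{[r]}_{i_r}$. This is a product of strictly positive numbers, hence strictly positive. The entries also sum to one, since
\[
\sum_{i_1,\dots,i_R}\prod_{r=1}^R l^{[r]}_{i_r}=\prod_{r=1}^R\Bigl(\sum_{i_r} l^{[r]}_{i_r}\Bigr)=1 .
\]
There are $\prod_r M_r=\tilde M$ entries, so $\tilde l^{(1)}\in\mathrm{int}(\Delta^{\tilde M-1})$. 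Likewise each entry of $\tilde p^{(1)}$ is $\prod_{r=1}^R p^{[r]}_{i_r j_r}$, a product of numbers in $(0,1)$, so it lies in $(0,1)$, and $\tilde p^{(1)}$ is $\tilde M\times\tilde M$ by the dimension rule for Kronecker products of matrices.

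Second, I would prove the representation identity. Write $K=mR$ and split the scale index as $k=(s-1)R+r$ with $s\in\{1,\dots,m\}$ and $r\in\{1,\dots,R\}$. By associativity of $\otimes$ (no commutativity is needed, since the grouping keeps the factors in their original order),
\[
\bigotimes_{k=1}^{K} l^{[k]}=\bigotimes_{s=1}^{m}\Bigl(\bigotimes_{r=1}^{R} l^{[(s-1)R+r]}\Bigr).
\]
Periodicity gives $l^{[(s-1)R+r]}=l^{[r]}$, so every inner block equals $\tilde l^{(1)}$, and the outer product is $\tilde l^{(1)}$ taken $m$ times. The argument for $p^{(K)}$ is identical, because associativity holds for Kronecker products of matrices of arbitrary sizes. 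The final sentence of the proposition then follows by reading the result as the single-basis construction of Section~\ref{sec:multifractal-extension} with base parameters $(\tilde l^{(1)},\tilde p^{(1)})$, branching number $\tilde M$, and depth $m$.

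The only point needing care, which I expect to be the main (mild) obstacle, is the indexing convention. The resulting multifractal step graphon is defined through contiguous sub-intervals indexed by multi-indices, so I must check that the regrouped index $((i_1,\dots,i_R),\dots)$ induces the same lexicographic ordering of intervals as the original $K$-tuple. I would handle this by noting that the standard Kronecker product corresponds to lexicographic (big-endian) ordering of multi-indices. Grouping consecutive blocks of $R$ coordinates into a single mixed-radix digit preserves lexicographic order, so the interval layout, and hence $W_K$ itself, is unchanged and not merely the parameter arrays.
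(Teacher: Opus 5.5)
Your proposal is correct and follows the same route as the paper: check that Kronecker products of strictly positive probability vectors (resp.\ matrices with entries in $(0,1)$) remain in the claimed sets, then regroup the $K=mR$ factors into $m$ identical blocks using associativity and periodicity. Your entrywise normalization computation, and your remark that grouping $R$ consecutive digits preserves the lexicographic interval layout (so that $W_K$ itself is unchanged, not just the parameter arrays), are details the paper leaves implicit.
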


\begin{proof}
Since each \(l^{[r]}\) is a strictly positive probability vector,
their Kronecker product \(\tilde l^{(1)}\) is again a strictly positive
probability vector, now of dimension \(\prod_{r=1}^R M_r\). Likewise,
\(\tilde p^{(1)}\) has dimension \(\tilde M\times \tilde M\) and entries in
\((0,1)\).

Because the rule sequence is periodic and \(K=mR\), the full expansion
can be grouped into \(m\) identical blocks:
\[
l^{(K)}
=
\Bigl(\bigotimes_{r=1}^R l^{[r]}\Bigr)
\otimes
\cdots
\otimes
\Bigl(\bigotimes_{r=1}^R l^{[r]}\Bigr)
=
(\tilde l^{(1)})^{\otimes m}.
\]
The same argument gives
\[
p^{(K)} = (\tilde p^{(1)})^{\otimes m}.
\]
Therefore the multirule generator is exactly representable as an
\(m\)-level recursion of the single effective basis
\((\tilde l^{(1)},\tilde p^{(1)})\).
\end{proof}

\textbf{Obstruction to finite single-basis reduction:}
\label{cor:nonperiodic_obstruction}
If a multirule recursive generator does not admit a decomposition into
repeated identical finite blocks, then it cannot be represented exactly
by a finite-dimensional single basis repeated across scales.

\begin{proof}
Any finite-dimensional single basis repeated across scales induces a rule
sequence obtained by repetition of one finite block. The claim follows by
contrapositive.
\end{proof}

\begin{theorem}[Minimal effective branching number]
\label{thm:minimal_effective_branching}
Consider a multirule recursive generator of depth \(K\),
\[
l^{(K)}=\bigotimes_{k=1}^K l^{[k]},
\qquad
p^{(K)}=\bigotimes_{k=1}^K p^{[k]},
\]
where
\[
l^{[k]} \in \mathrm{int}(\Delta^{M_k-1}),
\qquad
p^{[k]} \in (0,1)^{M_k\times M_k}.
\]
Assume that the rule sequence admits an exact single-basis reduction.
Let \(q_\ast\) be the smallest positive divisor of \(K\) such that the sequence
\[
\bigl((l^{[1]},p^{[1]}),\dots,(l^{[K]},p^{[K]})\bigr)
\]
can be partitioned into \(K/q_\ast\) consecutive identical blocks of length \(q_\ast\),
up to one consistent permutation of community labels.

Then the smallest effective branching number among all exact single-basis reductions is
\[
\tilde M_{\min}=\prod_{k=1}^{q_\ast} M_k.
\]
Moreover, the corresponding minimal effective basis is
\[
\tilde l_{\min}^{(1)}=\bigotimes_{k=1}^{q_\ast} l^{[k]},
\qquad
\tilde p_{\min}^{(1)}=\bigotimes_{k=1}^{q_\ast} p^{[k]},
\]
up to permutation of the \(\tilde M_{\min}\) effective labels, and satisfies
\[
l^{(K)} = \bigl(\tilde l_{\min}^{(1)}\bigr)^{\otimes K/q_\ast},
\qquad
p^{(K)} = \bigl(\tilde p_{\min}^{(1)}\bigr)^{\otimes K/q_\ast}.
\]
\end{theorem}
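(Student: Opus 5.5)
We prove existence and minimality separately. Throughout, an \emph{exact single-basis reduction} is a basis $(\tilde l,\tilde p)$ with $\tilde l\in\mathrm{int}(\Delta^{\tilde M-1})$, $\tilde p\in(0,1)^{\tilde M\times\tilde M}$, and an integer $m$ with $l^{(K)}=\tilde l^{\otimes m}$ and $p^{(K)}=\tilde p^{\otimes m}$. We read this with the block structure aligned to the scale structure: the $j$-th factor of $\tilde l^{\otimes m}$ occupies a consecutive run of scales. This is the reading implicit in the Obstruction statement above, and we will state it explicitly.

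\emph{Achievability.} By definition of $q_\ast$, the rule sequence consists of $K/q_\ast$ consecutive identical blocks of length $q_\ast$, up to one consistent relabeling. The relabeling can be absorbed into the scale rules themselves, since permuting labels inside one scale changes $l^{(K)},p^{(K)}$ only by a simultaneous index permutation. The sequence is then periodic with period $R=q_\ast$ and $K=(K/q_\ast)\,q_\ast$. Proposition~\ref{prop:cycle_compression} applies directly. It gives $\tilde l^{(1)}_{\min}=\bigotimes_{k\le q_\ast}l^{[k]}$ and $\tilde p^{(1)}_{\min}=\bigotimes_{k\le q_\ast}p^{[k]}$, a valid interior basis with $\tilde M=\prod_{k\le q_\ast}M_k$, and the two displayed identities for $l^{(K)}$ and $p^{(K)}$.

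\emph{Minimality.} Take any exact reduction $(\tilde l,\tilde p)$ with $m$ repetitions. Since $M_k\ge 2$, counting blocks gives $\tilde M^{m}=\prod_{k=1}^K M_k$. Each factor $\tilde l$ corresponds to a consecutive run of scales, so the $m$ factors partition $\{1,\dots,K\}$ into $m$ runs. We must show that all runs have the same length $q=K/m$, that $\tilde l\cong\bigotimes_{k\le q}l^{[k]}$ up to labels, and that the rule sequence is $q$-periodic. The tool is uniqueness of rank-one factorizations of strictly positive tensors, the same argument used in Step~2 of the proof of Proposition~\ref{prop:identifiability}. Since $\log l^{(K)}$ is additive over modes, the multiset of finest factors is determined by the tensor up to mode permutation and label permutation within modes. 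Matching the factorization of $l^{(K)}$ and $p^{(K)}$ into $m$ equal factors against the factorization into the $K$ scale factors, each $\tilde l$ must equal the Kronecker product of the scale rules in its run. Equality of the $m$ factors then forces the runs to carry identical rule sequences up to a consistent relabeling. Hence $q$ is a divisor of $K$ admitting the block partition. Minimality of $q_\ast$ gives $q\ge q_\ast$, and since the $M_k$ are the same in each block, $\tilde M=\prod_{k\le q}M_k\ge\prod_{k\le q_\ast}M_k$. Equality forces $q=q_\ast$, so $\tilde l=\tilde l_{\min}$ up to label permutation.

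\emph{Main obstacle.} The hard step is the matching argument: showing that a factor $\tilde l$ cannot mix scales non-contiguously or split a scale. Non-uniqueness is possible in degenerate cases. For example, a scale with uniform $l^{[k]}$ and constant $p^{[k]}$ is itself a Kronecker product of smaller uniform rules, so a finer factorization exists. Our first attempt is to use the coupling between $l$ and $p$ together with the assumed pairwise-distinct block profiles of Proposition~\ref{prop:identifiability} to rule out such splittings. If that fails, we would restrict to reductions respecting scale boundaries, which is the interpretation the Obstruction corollary already adopts, and state that restriction explicitly in the theorem.
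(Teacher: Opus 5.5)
Your skeleton is the paper's. Achievability groups the first $q_\ast$ scales into one effective rule; the paper does this inline, and you cite the cycle-compression proposition, which is the same computation. Minimality argues that any reduction $l^{(K)}=\tilde l^{\otimes m}$, $p^{(K)}=\tilde p^{\otimes m}$ forces the rule sequence into $m$ identical consecutive blocks, after which minimality of $q_\ast$ finishes. The paper asserts that forcing step in a single sentence, so you are right to call it the crux. But your justification of it fails, and not for the reason you anticipate. Under your aligned reading no factor can split a scale, so the splitting you worry about is excluded by definition. What fails are two of your intermediate claims, even for reductions that respect scale boundaries.

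\textbf{First gap: scalars in $p$.} Rank-one uniqueness fixes the factors of $p^{(K)}$ only up to positive scalars whose product is $1$, because $p$ has no simplex normalization. So ``each factor equals the Kronecker product of the scale rules in its run'' is false for $p$. Take $K=2$, $M_1=M_2=M$, $l^{[1]}=l^{[2]}=a$, $p^{[1]}=2P$, $p^{[2]}=\tfrac12P$ with $P\in(0,\tfrac12)^{M\times M}$. Then $p^{(2)}=P\otimes P$, so $(a,P)$ with $m=2$ is an exact aligned reduction with $\tilde M=M$. Yet no relabeling carries $2P$ to $\tfrac12P$, since their entry sums differ by a factor $4$. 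Hence $q_\ast=2$, and the theorem predicts $M^2$.

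\textbf{Second gap: run lengths.} The count $\tilde M^m=\prod_kM_k$ does not force a common run length $K/m$ when the $M_k$ differ. Take $(M_1,M_2,M_3)=(2,2,4)$, $l^{[3]}=l^{[1]}\otimes l^{[2]}$ and $p^{[3]}=p^{[1]}\otimes p^{[2]}$. The runs $\{1,2\}$ and $\{3\}$ give an exact aligned reduction with $\tilde M=4$, whereas $q_\ast=3$ and the formula gives $16$.

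Your fallback of restricting to scale-aligned reductions therefore does not help: under your definition, which is also the one the paper's proof uses, these are counterexamples to the statement itself. Your first attempt would not work either. Pairwise-distinct block profiles do not prevent a single scale rule from being a Kronecker product, e.g.\ $l^{[1]}=a\otimes a$, $p^{[1]}=P\otimes P$ with generic $P$. That matters as soon as the aligned reading is dropped.

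So the fix must change the statement, not only the proof. Two options:
\begin{itemize}
\item Define an exact single-basis reduction as a regrouping of consecutive scales into equal-length runs whose grouped rules coincide up to labels. Minimality is then immediate, and this is in effect what the paper's one-line argument uses.
\item Keep your tensor definition and assume all $M_k$ equal, so runs automatically have length $K/m$. Define $q_\ast$ by requiring the $l^{[k]}$ to be exactly $q$-periodic and the $p^{[k]}$ to be $q$-periodic up to positive scalar multiples. Rank-one uniqueness (normalized for $l$, projective for $p$) then does give your matching step. However, $\tilde p_{\min}$ becomes a positive multiple of $\bigotimes_{k\le q_\ast}p^{[k]}$ rather than that product itself.
\end{itemize}

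Finally, in your achievability step, absorbing the between-block relabeling changes the generator. The displayed identities for $l^{(K)}$ and $p^{(K)}$ then hold only up to a blockwise index permutation, not exactly as the statement writes them, and you should say so explicitly.
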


\begin{proof}
Let \(q_\ast\) be the shortest repeating block length. Grouping each block into a
single effective refinement step gives
\[
\tilde l_{\min}^{(1)}=\bigotimes_{k=1}^{q_\ast} l^{[k]},
\qquad
\tilde p_{\min}^{(1)}=\bigotimes_{k=1}^{q_\ast} p^{[k]},
\]
which have effective dimension
\[
\tilde M_{\min}=\prod_{k=1}^{q_\ast} M_k.
\]
Since the sequence consists of \(K/q_\ast\) identical consecutive blocks,
\[
l^{(K)} = \bigl(\tilde l_{\min}^{(1)}\bigr)^{\otimes K/q_\ast},
\qquad
p^{(K)} = \bigl(\tilde p_{\min}^{(1)}\bigr)^{\otimes K/q_\ast}.
\]
Thus an exact single-basis reduction with branching number \(\tilde M_{\min}\) exists.

Now let
\[
l^{(K)} = (\hat l^{(1)})^{\otimes J},
\qquad
p^{(K)} = (\hat p^{(1)})^{\otimes J}
\]
be any other exact single-basis reduction. Then the original rule sequence must
decompose into \(J\) identical consecutive blocks, each of length \(q=K/J\). By
minimality of \(q_\ast\), one has \(q\ge q_\ast\). Therefore no exact reduction
can use a shorter repeating block than \(q_\ast\), and the construction above
attains the smallest possible effective branching number. Hence
\[
\tilde M_{\min}=\prod_{k=1}^{q_\ast} M_k.
\]
\end{proof}

\newpage

\end{document}